\documentclass[11pt]{article}
\usepackage[sort&compress,square,comma,authoryear]{natbib}
\usepackage[a4paper, total={6.1in, 10in}]{geometry}

\usepackage[utf8]{inputenc}
\usepackage[T1]{fontenc}
\usepackage{hyperref}
\usepackage{url}
\usepackage{booktabs}
\usepackage{tabularx}
\newcolumntype{Y}{>{\centering\arraybackslash}X}
\newcolumntype{Z}{>{\raggedleft\arraybackslash}X}
\usepackage[parfill]{parskip}
\usepackage{amsfonts}
\usepackage{nicefrac}
\usepackage{microtype}
\usepackage[table,dvipsnames]{xcolor}
\usepackage{amsmath,amssymb,amsthm}
\usepackage{bbold}
\usepackage{bbm}
\usepackage{colortbl}
\usepackage{graphicx}
\graphicspath{{./}}
\usepackage{mathrsfs,mathtools}
\usepackage{subfigure}
\usepackage{wrapfig}
\usepackage{caption}
\usepackage[capitalize]{cleveref}
\usepackage{tikz}
\usetikzlibrary{calc, fadings, decorations, shapes, positioning, arrows}
\usepackage[graphicx]{realboxes}
\usepackage{enumitem}
\usepackage{algorithm}
\usepackage{algpseudocode}
\usepackage{makecell}

\definecolor{salmon}{RGB}{234,153,153}
\definecolor{cornflowerblue}{RGB}{100,149,237}
\hypersetup{
    colorlinks,
    linkcolor={cornflowerblue},
    citecolor={cornflowerblue},
    urlcolor={salmon}
}

\theoremstyle{plain}
\newtheorem{theorem}{Theorem}
\newtheorem{proposition}[theorem]{Proposition}

\theoremstyle{definition}

\theoremstyle{remark}

\definecolor{RefShade}{HTML}{F7F7F7}
\definecolor{BASShade}{HTML}{EEF5FF}
\definecolor{VanillaShade}{HTML}{F7F7F7}

\newcommand{\best}[1]{\textbf{#1}}
\newcommand{\second}[1]{\underline{#1}}

\definecolor{DeltaPos}{HTML}{1B7F3B}
\definecolor{DeltaNeg}{HTML}{B23A2A}
\definecolor{DeltaNear}{HTML}{6E6E6E}
\definecolor{DeltaTime}{HTML}{1B7F3B}

\newcommand{\dpos}[1]{\textcolor{DeltaPos}{\textbf{#1}}}

\newcommand{\dnear}[1]{\textcolor{DeltaNear}{#1}}
\newcommand{\dtime}[1]{\textcolor{DeltaTime}{\textbf{#1}}}

\title{BASIS: Batchwise Advantage Estimation from Single-Rollout Information Sharing for LLM Reasoning}
\date{}

\usepackage{authblk}

\author[1]{Shijin Gong\textsuperscript{*}}
\author[2]{Erhan Xu\textsuperscript{*}}
\author[2]{Kai Ye\textsuperscript{*}}
\author[2]{Giulia Livieri\textsuperscript{\ensuremath{\dagger}}}
\author[3]{Francesco Quinzan\textsuperscript{\ensuremath{\dagger}}}
\author[2]{Chengchun Shi\textsuperscript{\ensuremath{\dagger}}}

\affil[1]{School of Management, University of Science and Technology of China}
\affil[2]{Department of Statistics, London School of Economics and Political Science}
\affil[3]{Department of Engineering Science, University of Oxford}
\affil[ ]{\textsuperscript{*}These authors contributed equally and are listed in alphabetical order.
\quad \textsuperscript{\ensuremath{\dagger}}Joint senior authors and corresponding authors.
Emails: g.livieri@lse.ac.uk, francesco.quinzan@eng.ox.ac.uk, c.shi7@lse.ac.uk}

\begin{document}

\maketitle

\begin{abstract}
Reinforcement learning with verifiable rewards has become a standard recipe for improving the reasoning abilities of large language models. Existing algorithms face a tradeoff between computational efficiency and sample efficiency in value estimation and policy learning. We introduce BASIS, a critic-free post-training algorithm designed to address this tradeoff. At each online training step, BASIS samples only one rollout per prompt, but leverages rich information across prompts in the entire batch to improve value function estimation. Our experiments demonstrate that BASIS reduces MSE in value function estimation by {\bf 69}\% compared to REINFORCE++, a representative single-rollout baseline, and achieves lower MSE with one rollout than GRPO estimators with {\bf 8} rollouts. This improvement in value estimation translates to better policy optimization: using substantially less online training time, BASIS achieves performance close to multi-rollout GRPO-type baselines and often outperforms single-rollout REINFORCE-type baselines.
\end{abstract}

\section{Introduction}
Recent progress in large language model (LLM) reasoning has increasingly relied on large-scale reinforcement learning (RL). In particular, RL with verifiable rewards \citep[RLVR,][]{lambert2024tulu3} scores LLM responses using automated task verifiers and uses these scores as rewards to update the model’s policy. Popularized by DeepSeekMath and DeepSeek-R1 \citep{shao2024deepseekmath,deepseek2025r1}, RLVR has become a standard post-training framework for open-source large reasoning models \citep[see e.g.,][]{liu2025fin,yu2025dapo,zheng2025gspo}, with numerous follow-up works appearing shortly thereafter (Section \ref{sec:relatedwork}). 

Existing RLVR algorithms face a fundamental tradeoff between computational efficiency and sample efficiency. More accurate value or advantage estimates can improve the sample efficiency of the resulting policy optimization algorithm, but they are often expensive to obtain. PPO-type algorithms train an auxiliary critic network for value estimation \citep{ouyang2022instructgpt}, while GRPO-type algorithms sample multiple rollouts per prompt and use their average reward for the estimation \citep{shao2024deepseekmath,liu2025drgrpo,chu2025gpg,zheng2025gspo}. These algorithms improve sample efficiency at the cost of additional computation. In contrast, single-rollout critic-free algorithms are computationally cheaper, but their value estimates are less accurate, leading to noisier policy updates.

In response to this tradeoff, we propose BASIS, short for \textbf{B}atchwise \textbf{A}dvantage estimation from \textbf{S}ingle-rollout \textbf{I}nformation \textbf{S}haring, a critic-free RLVR algorithm that samples only one rollout per prompt at each online training step. 
\begin{itemize}[leftmargin=*]
    \item \textbf{Methodologically}, BASIS introduces a novel batchwise advantage estimator that borrows rich information across prompts in the whole batch to improve value and advantage estimation (see Figure~\ref{fig:bas_pipeline} for an illustration). Together with offline  estimation and online calibration (Section \ref{sec:method-section}), this enables BASIS to retain much of the sample efficiency of multi-rollout algorithms while using only a single rollout during online training.

    \item \textbf{For value estimation}, BASIS is highly sample efficient: it substantially reduces the MSE of single-rollout baselines and achieves lower MSE than multi-rollout baselines using {\bf 8} rollouts. Moreover, unlike existing baselines, BASIS is robust to reward heterogeneity and prompt difficulty, while  producing informative advantage estimates (Section \ref{sec:offline-diagnostics}).

    \item \textbf{For policy optimization}, the BASIS advantage estimator can serve as a plug-in component for a range of RLVR algorithms. It stabilizes training and mitigates collapse during optimization. The resulting fine-tuned model often outperforms single-rollout baselines and remains competitive with multi-rollout baselines while requiring considerably less training time (Section~\ref{sec:policyopt}).
\end{itemize}

\begin{figure*}[t]
    \centering
    \includegraphics[width=\textwidth]{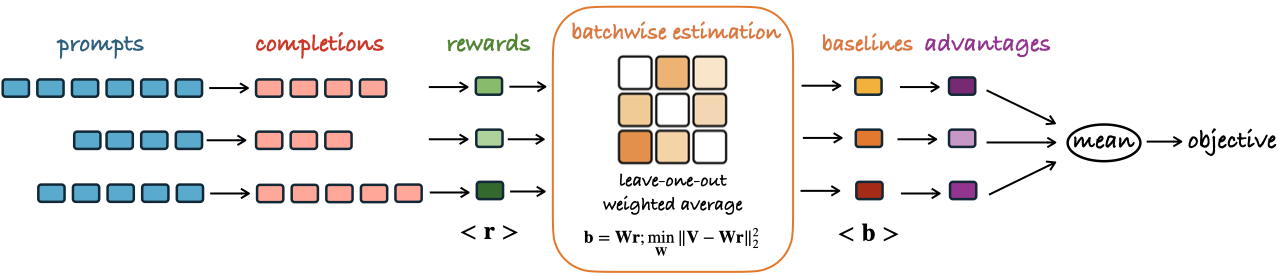}
    \caption{Overview of BASIS for constructing advantage estimates. BASIS samples one completion per prompt and uses reward information from the whole batch to estimate value baselines. The baselines are weighted averages of batch rewards, with the weight matrix $W$ shown in the center. The diagonal entries of $W$ are zero, shown in white, enforcing a prompt’s own reward to be excluded when estimating its baseline. Unlike single-rollout baselines such as REINFORCE++, which use a simple global average, BASIS chooses $W$ by minimizing the MSE of the value baselines. Advantages are then computed by subtracting the baselines from the observed rewards.}
    \label{fig:bas_pipeline}
\end{figure*}
\section{Related Work}\label{sec:relatedwork}
The algorithmic foundations of RLVR trace back to a long line of policy gradient algorithms in the RL literature \citep{sutton1999policy,williams1992reinforce}. One central theme in this literature is variance reduction: more accurate value or advantage estimates can reduce the variance of policy gradient estimation and improve the quality of policy optimization \citep{greensmith2004variance}. Motivated by this observation, existing approaches can be divided into four categories:
\begin{enumerate}[leftmargin=*]
    \item The first category, represented by PPO and SAO \citep{hou2026single}, employs a deep neural network to learn the value function and applies generalized advantage estimation for variance reduction \citep{schulman2016gae,schulman2017ppo}. While effective, this approach introduces an additional value network, making training substantially more costly.
    \item The second category, represented by GRPO, removes the value network and uses the average reward over multiple rollouts of the same prompt as a proxy for the value function \citep{shao2024deepseekmath,ahmadian2024rloo}.
    \item The third category follows the critic-free spirit of the second category, but aims to reduce the number of rollouts required per prompt. One approach uses the reward of a greedy, deterministic rollout, which only requires one rollout per prompt \citep{li2023remax}. Another approach reduces the required number of rollouts while maintaining sample efficiency by borrowing reward information either across training iterations for the same prompt \citep{wang2025kalman,xu2025single,gong2026kernelized,ruan2026spo++}, or across different prompts in the same batch \citep{hu2025reinforce,zeng2025shrinking,han2026ebpo}. 
    \item The last category studies variance reduction baselines beyond standard value functions \citep{hao2025on}. 
\end{enumerate} 
BASIS is most closely related to the third category, especially those methods that exploit batch-level reward information. However, as our experiments show, BASIS achieves substantially smaller MSE in value estimation than REINFORCE++ \citep{hu2025reinforce}, a representative batch-level baseline.

Beyond value and advantage estimation, existing work also studies exploration, clipping, rejection sampling, experience reuse, rollout down-sampling and pruning, uncertainty and difficulty-aware updates \citep{chen2025seed,dai2025stable,lin2025cppo,shrivastava2025sample,su2025klear,xiong2025a,xu2025not,zhang2025grpo,zhan2025exgrpo,cheng2026reasoning}. Since BASIS only modifies the value function baseline, it can serve as a plug-in component for many of these RLVR pipelines. 

Finally, recent work has begun to study the theoretical properties of RLVR   \citep{pang2025on,davis2025what,vojnovic2025what,yang2026your,huang2026learning,zhou2026demystifying}.


\section{Preliminaries}
We first introduce notation that will be used throughout the paper. We then describe how the classical REINFORCE algorithm \citep{williams1992reinforce} applies to LLM post-training. Finally, we use PPO, GRPO, and REINFORCE++ as representative variance reduction baselines to illustrate the first three categories of algorithms reviewed in Section \ref{sec:relatedwork}.  

\smallskip

\noindent \textbf{Notation}. We use $x\sim \mathcal{D}$ to denote a prompt drawn from a training dataset $\mathcal{D}$, and $y \sim \pi_\theta(\bullet|x)$ to denote a rollout sampled from the LLM policy
$\pi_\theta$ that we wish to fine-tune, with parameters $\theta$. 
For reasoning tasks, $y$ contains both a reasoning trace and a final answer.
An automated task verifier returns a scalar reward $r(x,y)\in\mathbb{R}$, where the reward function $r$ usually measures the correctness of the final answer. Let $\mathcal{B}=\{x_i\}_{i=1}^B$ denote a batch of prompts. 
To simplify the notation, we use $r_i$ to denote $r(x_i,y_i)$ for a given prompt-rollout pair $(x_i,y_i)$. We use $\theta_t$ to denote the model parameters at training iteration $t$, and define $V_{i,t}=\mathbb{E}_{y\sim \pi_{\theta_t}(\bullet|x_i)}[r(x_i,y)]$, $A_{i,t}=r_i-V_{i,t}$ as the policy's value and advantage, respectively.

\smallskip

\noindent \textbf{REINFORCE}. RLVR algorithms fine-tune the parameters $\theta$ by maximizing the expected reward
\begin{equation}\label{eqn:Jtheta}
    \mathcal J(\theta)
=
\mathbb E_{x\sim\mathcal D}
\Big\{\mathbb E_{y\sim\pi_\theta(\bullet|x)}
\bigl[r(x,y)\bigr]\Big\},
\end{equation}
which can be optimized using stochastic gradient methods \citep{robbins1951stochastic}. Specifically, its gradient $\nabla_\theta \mathcal J(\theta)$ can be shown to equal
$$
\mathbb E_{x\sim\mathcal D}
\Big\{\mathbb E_{y\sim\pi_\theta(\bullet|x)}
\bigl[r(x,y)\nabla_{\theta}\log \pi_{\theta}(y|x) \bigr]\Big\}.
$$
This identity motivates REINFORCE-type algorithms: starting from an initial parameter $\theta_1$, at each training step $t$, the algorithm samples a batch of prompts $\mathcal{B}=\{x_i\}_{i=1}^B$ and one rollout $y_i\sim \pi_{\theta_t}(\bullet|x_i)$ per prompt. It then estimates $\nabla_{\theta} \mathcal J(\theta_t)$ by averaging the product of the reward $r_i$ and policy score $\nabla_{\theta} \log \pi_{\theta_t}(y_i|x_i)$ over the batch, and applies stochastic gradient ascent to update $\theta_t$ to $\theta_{t+1}$.

\smallskip

\noindent \textbf{Variance reduction baselines}. The REINFORCE policy gradient estimator is unbiased, but often suffers from high variance, especially when the model is uncertain and sampled rollouts can vary substantially in quality. This variability leads to noisy rewards and gradient estimates, making policy optimization less stable. A standard remedy is to construct an advantage estimate $r_i-b_i$ by subtracting a baseline $b_i$ from the reward $r_i$ before multiplying by the policy score. 
Under suitable conditions, the optimal baseline that minimizes the variance of the policy gradient estimator is the value function \citep{greensmith2004variance}. 

PPO, GRPO, and REINFORCE++ all adopt this variance reduction principle, but differ in how they construct $b_i$, and hence the resulting advantage estimate. PPO sets the baseline to an estimated value function learned by an auxiliary neural network, which improves sample efficiency but introduces substantial computational cost. GRPO sets $b_i$ to the average reward over multiple rollouts from the same prompt, which eliminates the need for a value network but also increases computation through repeated rollout sampling. REINFORCE++ instead uses a global batch baseline, given by a simple average of $\{r_i\}_i$, which avoids both a value network and multiple rollouts per prompt, but the resulting baseline is shared across all prompts and may poorly approximate each prompt's individual value.

These choices highlight the central tradeoff addressed in this paper: accurate baseline estimation can be obtained by learning a value model or by repeating rollouts for each prompt, but both approaches increase computation.

\section{BASIS}
\label{sec:method-section}
We now introduce BASIS to address the tradeoff between accurate baseline estimation and computational efficiency. At each online training step, BASIS samples only one rollout per prompt while leveraging the rich information available across the entire training batch to improve each prompt's value and advantage estimation. 

BASIS proceeds in three steps: offline value estimation, batchwise refinement, and online calibration. It first constructs an initial prompt-level value estimate through offline estimation. It then refines this estimate online using reward information from the entire training batch. Finally, it calibrates the refined estimator at each training step. Algorithm \ref{alg:bas} summarizes the procedure for each training step. 
Below, we first describe the second step, which is the key ingredient of BASIS. We then explain how the offline value estimates are computed, and how the refined estimators are calibrated.

\smallskip

\noindent \textbf{Batchwise refinement}. At each training step $t$, given a prompt-rollout batch $\{(x_i,y_i)\}_{i=1}^B$, BASIS estimates the value $V_{i,t}$ for each $x_i$. 
Its main idea is simple: each $V_{i,t}$ is estimated by a weighted average of rewards from  other prompts in the batch:
\begin{equation}\label{eq:linear-baseline}
\widetilde{V}_{i,t}=\sum_{j\ne i} w_{ij} r_j,
\end{equation}
where the weights $\{w_{ij}\}_{j\ne i}$ are prompt-dependent. Such a linear combination allows BASIS to borrow information across prompts in the same batch, yielding a low-MSE estimate from only one rollout per prompt, as we show in the experiments.

Before specifying the weights $w_{ij}$, we highlight two features of this estimator. (i) It is prompt-dependent: the weights vary with $i$, unlike the global batch baseline used by REINFORCE++. With a proper choice of weights, this allows BASIS to better approximate the prompt-level value. (ii) It follows the leave-one-out principle of RLOO \citep{ahmadian2024rloo}: the reward $r_i$ of the target prompt is excluded from its own baseline. 

It remains to specify the weights $w_{ij}$. The following proposition motivates our choice.
\begin{proposition}[Best linear unbiased estimator (BLUE)]\label{proBLUB}
    For each prompt $i$, among all linear estimators of the form \eqref{eq:linear-baseline} that are unbiased for $V_{i,t}$, i.e., $\mathbb{E}(\widetilde{V}_{i,t}|\mathcal{B})=V_{i,t}$, the weights minimizing $\mathbb{E} [(V_{i,t}-\widetilde{V}_{i,t})^2|\mathcal{B}]$ are
    \begin{equation}\label{eqn:wij}
        w_{ij}=\frac{V_{i,t}V_{j,t}/\sigma_j^2}
{\sum_{k\ne i} V_{k,t}^2/\sigma_{k}^2},
\quad \forall j\in \{1,\cdots,B\}\setminus\{i\},
    \end{equation}
where $\sigma_j^2=\operatorname{Var}(r_j|x_j)$.
\end{proposition}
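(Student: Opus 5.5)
Conditional on the batch $\mathcal{B}$, the rewards $r_1,\dots,r_B$ are independent (one rollout per prompt, sampled independently), with $\mathbb{E}(r_j\mid\mathcal{B})=V_{j,t}$ and $\operatorname{Var}(r_j\mid\mathcal{B})=\sigma_j^2$. My plan is to use these two facts to recast the problem as a quadratic program with a single linear constraint, and then solve it by Lagrange multipliers or Cauchy--Schwarz.

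\textbf{Step 1: the unbiasedness constraint.} By linearity, $\mathbb{E}(\widetilde V_{i,t}\mid\mathcal{B})=\sum_{j\ne i}w_{ij}V_{j,t}$. So the unbiasedness requirement reads $\sum_{j\ne i}w_{ij}V_{j,t}=V_{i,t}$.

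\textbf{Step 2: the objective.} Under unbiasedness, the conditional MSE equals the conditional variance. By independence this is $\sum_{j\ne i}w_{ij}^2\sigma_j^2$. I take the weights $w_{ij}$ to be allowed to depend on $\mathcal{B}$, i.e.\ on $V_{\cdot,t}$ and $\sigma_\cdot^2$, but not on the rewards.

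\textbf{Step 3: minimise.} The problem is now
\[
\min_{\{w_{ij}\}_{j\ne i}}\ \sum_{j\ne i}w_{ij}^2\sigma_j^2 \quad\text{s.t.}\quad \sum_{j\ne i}w_{ij}V_{j,t}=V_{i,t}.
\]
This is the classical generalized-least-squares / BLUE problem for estimating a scalar.

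I would solve it directly by Cauchy--Schwarz. Write $V_{i,t}=\sum_{j\ne i}(w_{ij}\sigma_j)(V_{j,t}/\sigma_j)$. This gives
\[
V_{i,t}^2\le \Bigl(\sum_{j\ne i}w_{ij}^2\sigma_j^2\Bigr)\Bigl(\sum_{k\ne i}V_{k,t}^2/\sigma_k^2\Bigr),
\]
so the objective is at least $V_{i,t}^2/\sum_{k\ne i}V_{k,t}^2/\sigma_k^2$. Equality holds iff $w_{ij}\sigma_j\propto V_{j,t}/\sigma_j$, i.e.\ $w_{ij}=cV_{j,t}/\sigma_j^2$. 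The constraint then fixes $c=V_{i,t}/\sum_{k\ne i}V_{k,t}^2/\sigma_k^2$, which is exactly \eqref{eqn:wij}.

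Alternatively, the Lagrangian stationarity condition $2w_{ij}\sigma_j^2=\lambda V_{j,t}$ yields the same form. Strict convexity of the objective, since all $\sigma_j^2>0$, gives uniqueness.

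\textbf{Main obstacle.} The only delicate point is justifying the setup: the rewards must be conditionally independent given $\mathcal{B}$, and the weights must be measurable with respect to $\mathcal{B}$ only. This is what makes the cross terms vanish in the variance, and it lets the optimal weights depend on the unknown $V$'s as a "BLUE oracle."

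I would also state the nondegeneracy assumptions explicitly: $\sigma_j^2>0$ for all $j$, and $\sum_{k\ne i}V_{k,t}^2/\sigma_k^2>0$, i.e.\ some other prompt has nonzero value. Without the latter, no unbiased estimator exists unless $V_{i,t}=0$. If some $\sigma_j^2=0$, the formula would need a limiting argument, so I would exclude that case.
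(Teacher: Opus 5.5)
Your proposal is correct and follows the paper's proof. You reduce unbiasedness to the linear constraint $\sum_{j\ne i}w_{ij}V_{j,t}=V_{i,t}$, identify the conditional MSE with $\sum_{j\ne i}w_{ij}^2\sigma_j^2$ using conditional uncorrelatedness, and solve the resulting quadratic program, with uniqueness from strict convexity under the same nondegeneracy conditions. Your Cauchy--Schwarz route is only a minor variant of the paper's Lagrangian calculation; its small advantage is that it certifies global optimality directly and gives the minimal MSE $V_{i,t}^2/\sum_{k\ne i}V_{k,t}^2/\sigma_k^2$.
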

Proposition \ref{proBLUB} characterizes the ideal weights one would use to minimize the MSE of the resulting value estimator while preserving its unbiasedness. Intuitively, when estimating the value of the $i$th prompt, the weight increases with both the target value $V_{i,t}$ and the source value $V_{j,t}$, and decreases with the source reward variance $\sigma_j^2$.

Of course, the BLUE weights cannot be used directly during online training, because the true values $V_{i,t}$ and variances $\sigma_i^2$ are unknown. BASIS therefore uses this proposition as a guiding principle. For binary rewards, given an initial value estimate $\widehat V_{i,t}$, we estimate the reward variance by 
$\widehat{\sigma}_i^2= \widehat{V}_{i,t}(1-\widehat V_{i,t})$. We then plug these estimates into \eqref{eqn:wij} to construct the refined estimator $\widetilde{V}_{i,t}$.

When some initial values are close to zero or one, their Bernoulli variance estimates are close to zero, making the weights in \eqref{eqn:wij} numerically unstable. To address this, we define the following active set
\begin{equation*}
\mathcal A_t=\{i:\epsilon<\widehat V_{i,t}<1-\epsilon\},
\end{equation*}
for some small threshold $\epsilon>0$. For prompts in the active set, the sums in \eqref{eq:linear-baseline} and the denominator of \eqref{eqn:wij} are taken over $\mathcal A_t\setminus\{i\}$ rather than all prompts. For prompts outside the active set, BASIS falls back to the conservative zero baseline $b_i=0$.

We will describe how the initial value estimates are obtained below. To conclude this batchwise refinement step, we note that these initial estimates could also be used directly for advantage estimation without refinement. However, the batchwise refinement yields a substantially more accurate estimator by leveraging information shared across prompts. Figure \ref{fig:online-weighting-cached-value} illustrates this effect by comparing the MSEs of the initial and refined estimators at three checkpoints. It can be seen that the refined estimator achieves considerably lower MSE and is much less sensitive to a calibration hyperparameter.

\begin{figure*}[t]
\centering
\begin{minipage}[t]{0.32\linewidth}
\centering
\includegraphics[width=\linewidth]{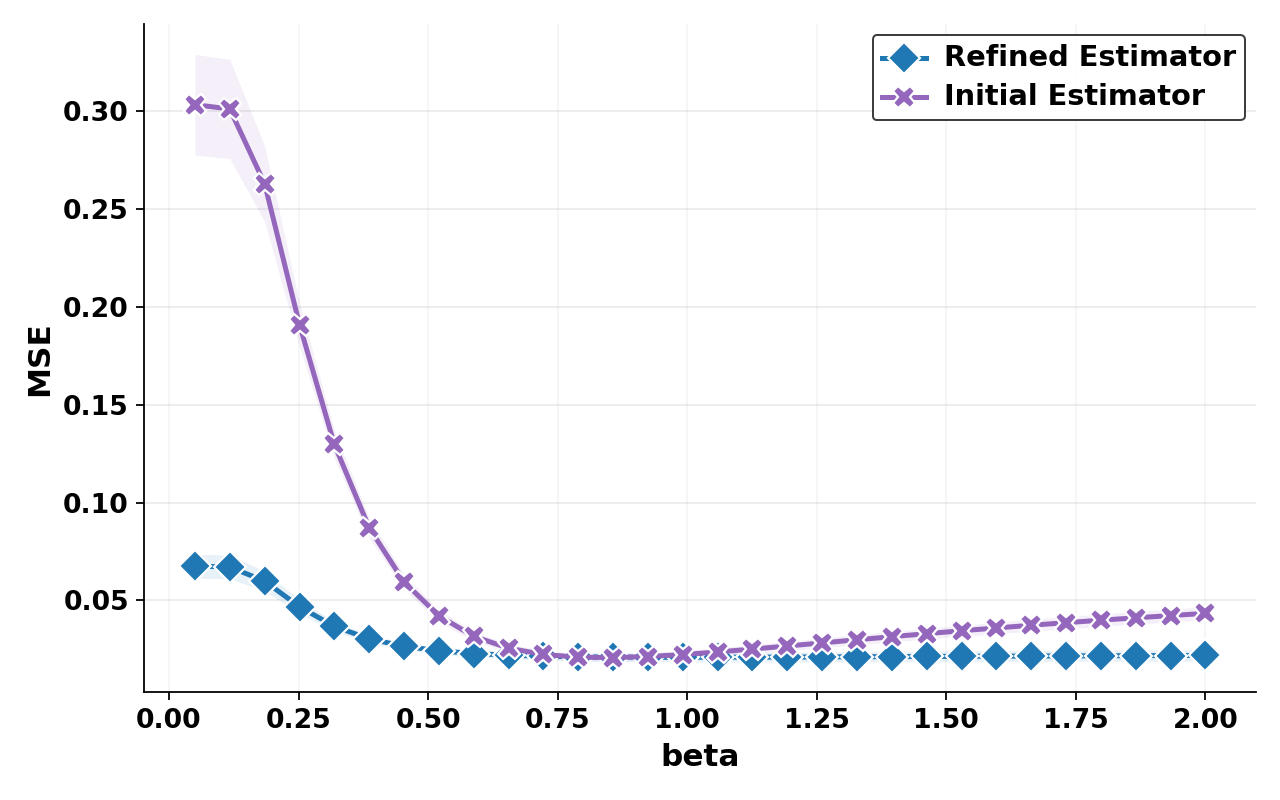}\\[-0.25em]
\end{minipage}
\hfill
\begin{minipage}[t]{0.32\linewidth}
\centering
\includegraphics[width=\linewidth]{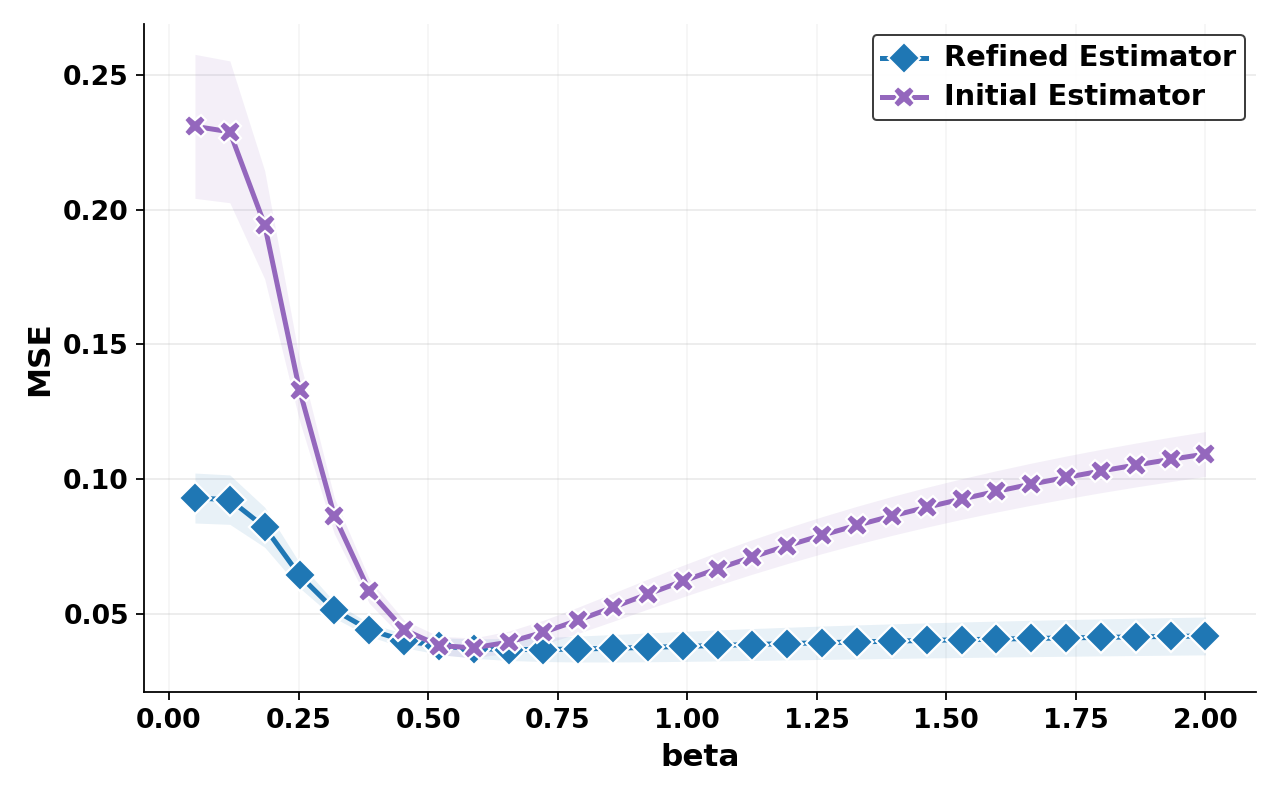}\\[-0.25em]
\end{minipage}
\hfill
\begin{minipage}[t]{0.32\linewidth}
\centering
\includegraphics[width=\linewidth]{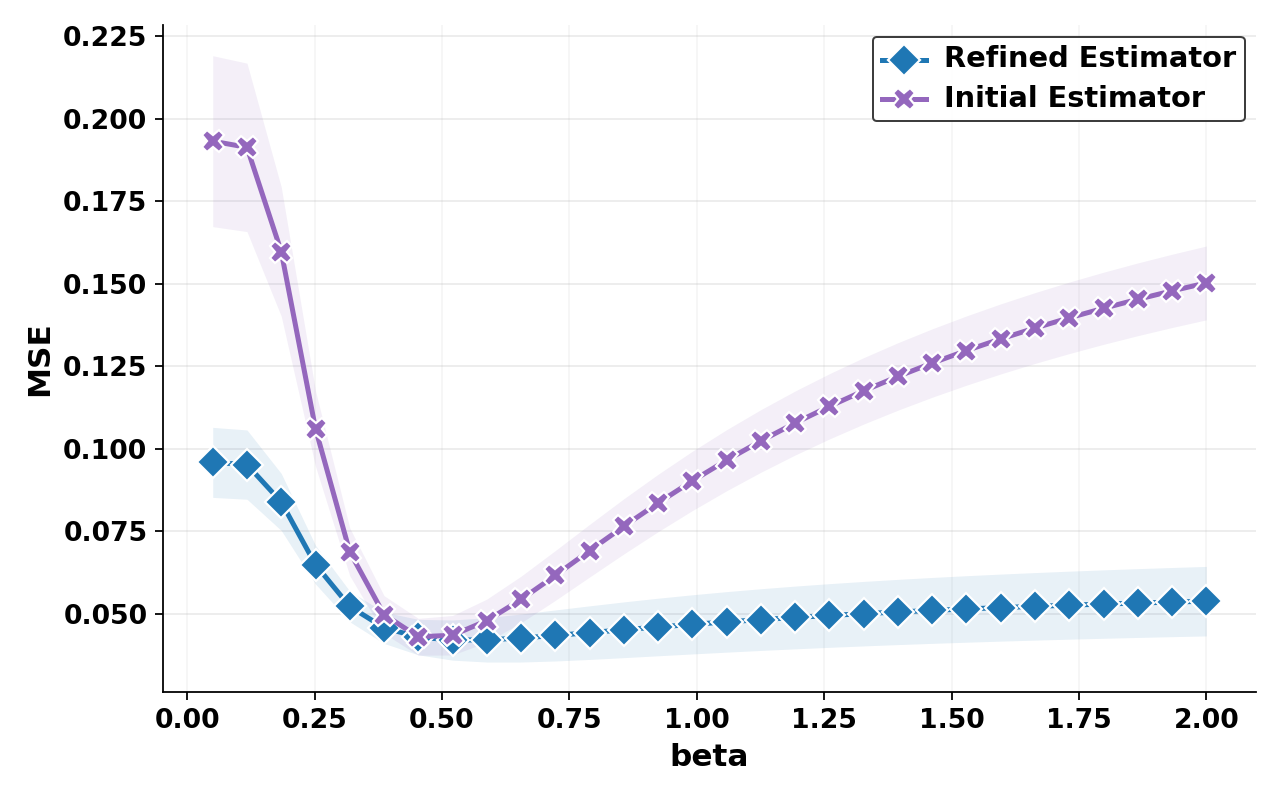}\\[-0.25em]
\end{minipage}

\caption{MSEs of the initial and refined estimators at three model checkpoints. The left panel reports MSE at an early training stage, while the middle and right panels report MSE at later stages. Both estimators depend on a hyperparameter $\beta$, and the curves visualize their MSE across different values of $\beta$.}
\label{fig:online-weighting-cached-value}
\end{figure*}

\smallskip 

\noindent \textbf{Offline value estimation}. We next describe how BASIS computes the initial value estimates $\widehat{V}_{i,t}$. The idea is motivated by direct preference optimization \citep{rafailov2023dpo} and its extension to LLM reasoning \citep{brantley2025accelerating}. To begin with, consider the following Kullback--Leibler (KL)-regularized objective function, 
\[
\mathbb{E}_{x\sim \mathcal{D}}
\mathbb{E}_{y\sim \pi(\bullet|x)}[r(x,y)
-
\beta\,\mathrm{KL}\!\left(\pi(\bullet|x)\,\|\,\pi_{\mathrm{ref}}(\bullet|x)\right)],
\]
where $\pi_{\mathrm{ref}}=\pi_{\theta_1}$ denotes the reference policy, taken to be the initial policy before fine-tuning, and $\beta>0$ controls the strength of the KL penalty. 

This objective is particularly appealing because its maximizer $\pi_\beta^*$ admits the following closed-form expression \citep{rafailov2023dpo}:
\begin{equation*}
\pi_\beta^*(y|x)
=
\frac{\pi_{\mathrm{ref}}(y|x)}{Z_\beta(x)}
\exp\Big(\frac{r(x,y)}{\beta}\Big),
\end{equation*}
where $Z_\beta(x)$ is a normalizing constant. This expression connects $\pi_{\beta}^*$ directly to $\pi_{\mathrm{ref}}$. As a result, estimating the value of $\pi_\beta^*$ does not require training or sampling from $\pi_\beta^*$ itself. Instead, it can be done offline by sampling rollouts from $\pi_{\mathrm{ref}}$  \citep{brantley2025accelerating}. We formalize this observation in the following proposition. 
\begin{proposition}[Closed-form value under $\pi_\beta^*$]\label{prop:soft-value}
For any prompt $x$ and $\beta>0$, the value function under $\pi_{\beta}^*$ equals
\begin{eqnarray}\label{eqn:Vbetaxstar}
\begin{split}
&V_\beta^*(x)
:=
\mathbb{E}_{y\sim \pi_\beta^*(\bullet|x)}[r(x,y)]\\
=&
\frac{
\mathbb E_{y\sim\pi_{\mathrm{ref}}(\bullet\mid x)}
\!\left[r(x,y)\exp(r(x,y)/\beta)\right]
}{
\mathbb E_{y\sim\pi_{\mathrm{ref}}(\bullet\mid x)}
\!\left[\exp(r(x,y)/\beta)\right]
}.
\end{split}
\end{eqnarray}
\end{proposition}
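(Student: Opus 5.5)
The plan is a direct change-of-measure computation starting from the closed form of $\pi_\beta^*$ stated just above. Fix a prompt $x$ and $\beta>0$. By definition, $V_\beta^*(x)=\sum_y \pi_\beta^*(y|x)\,r(x,y)$, with an integral in place of the sum if the response space is treated as a general measurable space. Substituting
\[
\pi_\beta^*(y|x)=\frac{\pi_{\mathrm{ref}}(y|x)\exp(r(x,y)/\beta)}{Z_\beta(x)}
\]
gives
\[
V_\beta^*(x)=\frac{1}{Z_\beta(x)}\,\mathbb E_{y\sim\pi_{\mathrm{ref}}(\bullet|x)}\bigl[r(x,y)\exp(r(x,y)/\beta)\bigr].
\]
This is just importance weighting with likelihood ratio $\pi_\beta^*/\pi_{\mathrm{ref}}=\exp(r/\beta)/Z_\beta$.

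The second step identifies the normalizing constant. Because $\pi_\beta^*(\bullet|x)$ is a probability distribution, summing the closed form over $y$ gives
\[
Z_\beta(x)=\sum_y \pi_{\mathrm{ref}}(y|x)\exp(r(x,y)/\beta)=\mathbb E_{y\sim\pi_{\mathrm{ref}}(\bullet|x)}\bigl[\exp(r(x,y)/\beta)\bigr].
\]
Plugging this into the previous display yields \eqref{eqn:Vbetaxstar}.

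I do not expect a real obstacle here; the only points needing care are well-definedness conditions. The denominator is strictly positive because the exponential is positive. Both expectations must be finite, which holds automatically when rewards are bounded, for instance binary verifiable rewards, or when the response space is finite, as for token sequences of bounded length. I would state these as standing assumptions.

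If one does not want to take the closed form of $\pi_\beta^*$ as given, I would first derive it. For each fixed $x$, maximize $\mathbb E_\pi[r]-\beta\,\mathrm{KL}(\pi\|\pi_{\mathrm{ref}})$ over $\pi(\bullet|x)$. This objective equals $-\beta\,\mathrm{KL}(\pi\|\pi_\beta^*)+\beta\log Z_\beta(x)$, so it is uniquely maximized at $\pi=\pi_\beta^*$ by Gibbs' inequality. The proposition then follows as above.
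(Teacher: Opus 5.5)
Your proof is correct. Its core step, substituting the closed form of $\pi_\beta^*$ and reading off $Z_\beta(x)=\mathbb E_{y\sim\pi_{\mathrm{ref}}(\bullet\mid x)}[\exp(r(x,y)/\beta)]$ from normalization, is exactly the second half of the paper's proof.

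Where you differ is in justifying the closed form itself. The paper re-derives it inside the proof. It forms a Lagrangian for $\sum_y p_y r(x,y)-\beta\sum_y p_y\log(p_y/p_y^{\mathrm{ref}})$ with the single constraint $\sum_y p_y=1$. It then solves the coordinatewise first-order conditions to get $p_y\propto p_y^{\mathrm{ref}}\exp(r(x,y)/\beta)$, and invokes concavity to conclude that this stationary point is the global maximizer. You instead take the closed form from the main text, or alternatively use the identity $\mathbb E_\pi[r]-\beta\,\mathrm{KL}(\pi\|\pi_{\mathrm{ref}})=-\beta\,\mathrm{KL}(\pi\|\pi_\beta^*)+\beta\log Z_\beta(x)$ together with Gibbs' inequality.

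Your route has several advantages:
\begin{itemize}
\item It delivers global optimality and uniqueness in one line.
\item It never differentiates with respect to individual probabilities, so it needs no finiteness or countability of the response space.
\item It sidesteps the nonnegativity constraints $p_y\ge 0$ that the paper's Lagrangian leaves implicit.
\item You also state the integrability conditions (bounded rewards or a finite response space) that the paper uses tacitly.
\end{itemize}
The paper's Lagrangian calculation has the compensating advantage of being constructive: it discovers the exponential-tilting form, whereas the Gibbs argument presupposes it.
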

The expectations in both the numerator and denominator of \eqref{eqn:Vbetaxstar} are taken with respect to the reference policy $\pi_{\mathrm{ref}}$. This makes it feasible to estimate $V_\beta^*(x)$ offline, without training or sampling from $\pi_\beta^*$ itself. Specifically, for each prompt, we sample a set of reference rollouts from $\pi_{\mathrm{ref}}$ and score them using the same automated verifier. We then compute $\widehat{V}_{\beta}(x)$, a plug-in estimate of \eqref{eqn:Vbetaxstar} by replacing the expectations in the numerator and denominator with their empirical averages over these reference rollouts. These offline estimates are then used as the initial value estimates in BASIS. Below, we describe how the hyperparameter $\beta$ is adaptively selected across training steps.

\begin{algorithm}[t]
\caption{One online training step of BASIS}
\label{alg:bas}
\begin{algorithmic}[1]
\Require Policy $\pi_{\theta_t}$, batch $\mathcal B=\{x_i\}_{i=1}^B$, search interval $\mathbb{B}$, cached reward statistics for evaluating $\widehat{V}_{\beta}(x_i)$, threshold $\epsilon$
\State Set $\theta_{\mathrm{old}} \gets \theta_t$
\For{$i=1,\dots,B$}
    \State Sample $y_i \sim \pi_{\theta_{\mathrm{old}}}(\bullet|x_i)$ and compute $r_i$
    \State Set $\widehat V_{i,t,\beta}\gets\widehat V_{\beta}(x_i)$, $\mathcal A_{t,\beta}\gets\{i:\epsilon <\widehat V_{i,t,\beta}<1-\epsilon\}$, compute $\widetilde V_{i,t,\beta}$ by Eqs.~\eqref{eq:linear-baseline} and \eqref{eqn:wij} for $\beta\in \mathbb B$
\EndFor
\State Select $\beta_t$ by Eq.~\eqref{eq:online-beta-selection}
\For{$i=1,\dots,B$}
    \State Set $b_i\gets \widetilde V_{i,t,\beta_t}\mathbbm{1}_{\{i \in \mathcal A_{t,\beta_t}\}}$ and $\widehat A_i\gets r_i-b_i$
\EndFor
\State Perform the policy update using $\{\widehat A_i\}_{i=1}^B$
\end{algorithmic}
\end{algorithm}

\smallskip

\noindent \textbf{Online calibration}. 
To illustrate how the KL-regularized optimal values $V_\beta^*$ can be used to approximate the values $V_{i,t}$ across training steps, consider two extreme cases:
\begin{enumerate}[leftmargin=*]
    \item At the first training step, the value $V_{i,t}$ is computed under the initial policy, which is essentially the reference policy. This corresponds to setting $\beta$ to infinity: the KL penalty becomes dominant, and the induced optimal policy $\pi_\beta^*$ coincides with the reference policy.
    \item When the algorithm converges, for sufficiently large $t$, $V_{i,t}$ approaches the value under the reward-optimal policy that maximizes \eqref{eqn:Jtheta}. This corresponds to setting $\beta$ close to zero: the KL penalty vanishes, and the induced optimal policy $\pi_\beta^*$ approaches the reward-optimal policy.
\end{enumerate}
More generally, the KL-regularized optimal values provide a continuum of value estimates that approximate how the policy value evolves from the reference policy toward a reward-optimal policy during training. Motivated by this observation, for an arbitrary training step $t$ and prompt $x$, we use the family of values $\{V_\beta^*(x):\beta\in\mathbb{B}\}$ to approximate the value of the current learning policy $V_{i,t}$. 

In our implementation, we set $\mathbb{B}=[0.01,5]$.  The calibration parameter $\beta$ is selected adaptively at each training step, allowing its value to vary as training progresses. Specifically, at each training step $t$, we select $\beta$ by minimizing
\begin{equation}\label{eq:online-beta-selection}
\beta_t \in \arg\min_{\beta\in\mathbb{B}}\;
\frac{1}{|\mathcal A_{t,\beta}|}
\sum_{i\in\mathcal A_{t,\beta}}
\bigl(r_i^{(t)}-\widetilde V_{i,t,\beta}\bigr)^2,
\end{equation}
where $\widetilde V_{i,t,\beta}$ and $\mathcal A_{t,\beta}$ denote, respectively, the batchwise refined estimator and the active set obtained by using $\widehat V_{\beta}(x_i)$ as the initial value estimate. 

Figure \ref{fig:basis-beta-traj} in Appendix \ref{app:online-implementation} visualizes the selected $\beta_t$. As expected, it generally decreases over training. 
Finally, we set the baseline $b_i=\widetilde V_{i,t,\beta_t}$ and compute the advantage as $\widehat A_{i,t}=r_i-b_i$.

\section{Experiments}\label{sec:experiments}
We conduct numerical experiments to evaluate BASIS along two dimensions: value estimation (Section \ref{sec:offline-diagnostics}), where we measure the accuracy of the estimated values, and policy optimization (Section \ref{sec:policyopt}), where we assess the downstream performance of the fine-tuned models. 

\subsection{Value estimation}\label{sec:offline-diagnostics}
We begin with a high-level summary of our findings and provide details in the following paragraphs. BASIS is highly sample efficient and robust for value estimation: 
    \begin{enumerate}[leftmargin=*]
        \item It reduces the REINFORCE++ value estimator's MSE by ${\bf 69}\%$, and with only a single rollout per prompt, achieves smaller MSE than GRPO-type estimators with {\bf 8} rollouts  (Figure \ref{fig:combined}(a)).
        \item Its estimator is also robust to reward heterogeneity within a batch, unlike the global baseline used by REINFORCE++ (Figure \ref{fig:combined}(b)), and remains robust across prompt difficulty levels, unlike GRPO-type algorithms (Figure \ref{fig:combined}(c)).
        \item Moreover, unlike GRPO, BASIS often produces an informative nonzero advantage (Figure \ref{fig:combined}(d)). 
    \end{enumerate}
    
\noindent \textbf{Setup}. We post-train Qwen2.5-Math-7B with GRPO on a subset of the MATH dataset consisting of Level 3--5 problems \citep{hendrycksmath2021}. We then use an intermediate training checkpoint to evaluate the accuracy of different value estimators.  

Specifically, for each prompt, we first approximate its oracle value by averaging rewards over 256 Monte Carlo rollouts from the checkpoint. We then repeat the following procedure: In each repeat, we independently sample a batch of prompts  from the same checkpoint, compute the BASIS, GRPO, RLOO and REINFORCE++ value estimators, and compute their squared errors against the Monte Carlo oracle values. We then average these squared errors over different repeats to obtain the MSE. More details are provided in Appendix~\ref{app:baseline-diagnostic-details}.
\begin{figure*}[t]
\centering
\includegraphics[width=\textwidth]{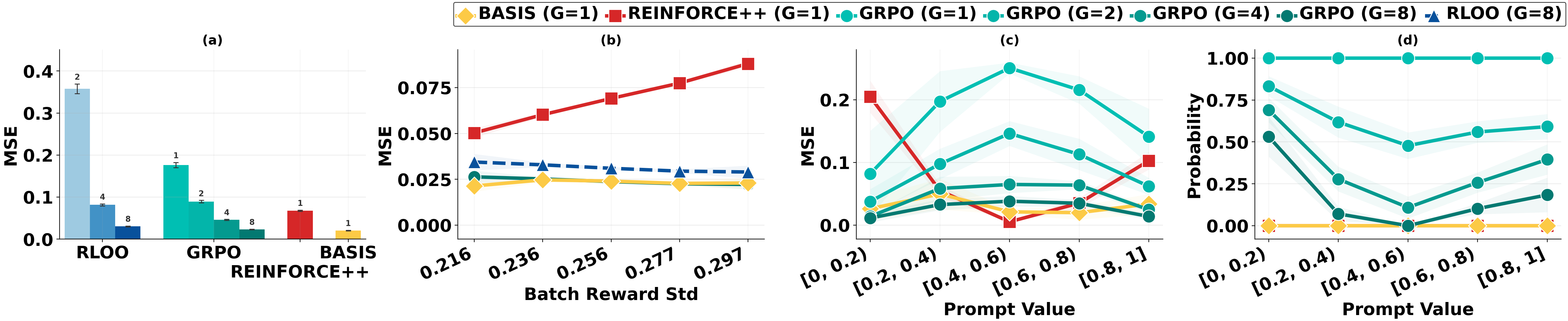}
\caption{MSE of various value estimates.  (a) MSE aggregated over all prompts. 
(b) MSE across batches grouped by the heterogeneity of within-group values, measured by the standard deviation of $\{V_{i,t}\}_{i=1}^B$. 
(c) MSE across prompts grouped by prompt-level value. 
(d) Frequency with which the estimated baseline is exactly $0$ or $1$. $G$ in the legend denotes the number of rollouts per prompt. In (b), we report GRPO and RLOO only with $8$ rollouts, since their MSEs are much larger with fewer rollouts. In (c) and (d), only MSEs of GRPO value estimators are reported to keep the figure readable. MSEs of RLOO value estimators are deferred to Figure~\ref{fig:combined_rloo} in Appendix~\ref{app:baseline-diagnostic-details}. In (d), the yellow line (BASIS) and the red line (REINFORCE++) largely overlap.}
\label{fig:combined}
\end{figure*}
\smallskip

\noindent \textbf{Finding 1: BASIS is sample efficient}. 
Figure \ref{fig:combined}(a) reports the MSE of different value estimators averaged over all prompts. Compared with single-rollout baselines such as REINFORCE++, BASIS reduces the MSE by ${\bf 69}\%$. More strikingly, even with one rollout per prompt, BASIS achieves lower MSE than GRPO and RLOO with ${\bf 8}$ rollouts. This demonstrates the utility of batchwise learning: by borrowing information across the entire batch, BASIS substantially improves value estimation without requiring repeated rollouts from the same prompt. Additionally, RLOO has larger MSE than GRPO, since its leave-one-out construction uses one fewer rollout for value estimation. 

\smallskip

\noindent \textbf{Finding 2: BASIS is robust}. 
Figure \ref{fig:combined}(b) reports the MSE of different value estimators across data batches grouped by the heterogeneity of their prompt-level values, measured by the standard deviation of $\{V_{i,t}\}_{i=1}^B$. Figure \ref{fig:combined}(c) reports the MSE across subsets of prompts grouped by their values. 

We make three observations. (i) First, BASIS is nearly flat across all five groups in both figures, demonstrating its robustness to both within-batch reward heterogeneity and prompt difficulty. (ii) Second, REINFORCE++ is sensitive to both factors. As batch-level heterogeneity increases from {\bf 0.21} to {\bf 0.31}, its MSE nearly doubles. This is expected because the variance of the global batch baseline used by REINFORCE++ increases naturally with the within-batch reward heterogeneity. Similarly, across prompt difficulty levels, REINFORCE++ performs best on medium-difficulty prompts but incurs much larger MSEs on easy and hard prompts, since the global batch mean pulls all baseline estimates toward the middle. (iii) Finally, GRPO and RLOO perform worst on medium-difficulty prompts. This is also expected for binary rewards: prompts with values near the middle have the largest reward variance.

\smallskip

\noindent \textbf{Finding 3: BASIS is informative}. 
Figure \ref{fig:combined}(d) reports how often the estimated baseline is exactly $0$ or $1$, leading to a zero advantage and therefore a zero contribution to the policy gradient estimator. BASIS never returns a $0$ or $1$ baseline in this setting, and therefore consistently provides an informative signal. In contrast, GRPO often produces such extreme baselines, particularly for easy and hard prompts. 

\subsection{Policy optimization}\label{sec:policyopt}
We now show that improved value and advantage estimation leads to more effective policy optimization. In particular, our experiments demonstrate three major findings:
\begin{enumerate}[leftmargin=*]
    \item The BASIS advantage estimator can be used as a versatile plug-in component for a number of multi-rollout GRPO-type algorithms, roughly {\bf halving} online learning time while preserving most of their downstream performance (Table~\ref{tab:qwen3-main}).
    \item Compared with single-rollout REINFORCE-type baselines, BASIS often achieves better downstream accuracy, with absolute improvements of up to {\bf 44.8} and {\bf 9.5} percentage points over REINFORCE and REINFORCE++,  respectively, while using roughly {\bf one-half} of their computation (Table \ref{tab:qwen3-main}). 
    \item REINFORCE can {\bf collapse} during training: its downstream accuracy may drop considerably as the number of training iterations increases. BASIS shows a much more stable learning curve, suggesting that more accurate value and advantage estimation stabilizes single-rollout policy optimization and helps avoid such collapse (see Figure~\ref{fig:qwen3-g1-vanilla-vs-basis} in Appendix~\ref{app:online-results}). 
\end{enumerate}

\noindent \textbf{Setup}. We compare different algorithms by using them to post-train Qwen3-4B on the DAPO-Math-17K training split \citep{yu2025dapo} and evaluating the resulting models on seven benchmarks: AIME 2024, AIME 2025, AMC 2023, MATH-500 \citep{hendrycksmath2021}, Minerva Math \citep{lewkowycz2022minerva}, OlympiadBench \citep{he2024olympiadbench}, and HMMT 2025 \citep{balunovic2026matharena}. For AIME, AMC, and HMMT, we report avg@32, the average accuracy of the resulting model over 32 sampled rollouts per problem. For MATH-500, Minerva Math, and OlympiadBench, we report accuracy from a single greedy deterministic rollout per problem. 

We also evaluate the models on five out-of-domain (OOD) benchmarks: 
HumanEval+ and MBPP+ for code generation \citep{liu2023evalplus}, MMLU-Pro for broad-domain knowledge and reasoning \citep{wang2024mmlupro}, MGSM for multilingual mathematical reasoning \citep{shi2022multilingual}, and IFEval for instruction following \citep{zhou2023ifeval}. We report each benchmark's standard task metric as well as their average.


To demonstrate the versatility of BASIS as a plug-in advantage estimation method, we combine it with the original GRPO and two representative follow-up variants: GPG \citep{chu2025gpg}, which removes the PPO step that requires importance sampling, and GSPO \citep{zheng2025gspo}, which replaces the per-token importance ratio with a sequence-level importance ratio. In each of these combinations, BASIS modifies only the advantage estimator: we use the BASIS baseline with a single rollout per prompt, and compare it against three baselines: the original GRPO-type algorithm, which uses eight rollouts per prompt and the group mean as the baseline; its vanilla single-rollout version, which uses a zero baseline; and REINFORCE++, which uses a global batch baseline. To ensure a fair comparison, all algorithms use the same sampling budget of $512$ rollouts per training step. Thus, the 8-rollout methods sample 64 prompts per training step, while the single-rollout methods sample 512 prompts per step. We train all baseline algorithms for $300$ steps, corresponding to approximately $1.1$ passes over the training set with $17{,}398$ prompts for 8-rollout methods and approximately $8.8$ passes for single-rollout methods. For BASIS, we train for only 150 steps, requiring roughly half the online learning time. As shown below, despite this reduced training budget, BASIS achieves performance comparable to the multi-rollout methods and often better performance than the single-rollout baselines.

\begin{table*}[t]
\centering
\small
\setlength{\tabcolsep}{4.0pt}
\renewcommand{\arraystretch}{1.10}
\resizebox{\textwidth}{!}{
\begin{tabular}{llccccccccc}
\toprule
\textbf{Objective} &
\textbf{Setting} &
\textbf{Time} &
\makecell{\textbf{AIME}\\\textbf{2024}} &
\makecell{\textbf{AIME}\\\textbf{2025}} &
\makecell{\textbf{AMC}\\\textbf{2023}} &
\makecell{\textbf{MATH}\\\textbf{-500}} &
\makecell{\textbf{Minerva}\\\textbf{Math}} &
\makecell{\textbf{Olympiad}\\\textbf{Bench}} &
\makecell{\textbf{HMMT}\\\textbf{2025}} &
\textbf{Avg} \\
\midrule

GRPO & \(G=8\), step 300 & 15.5h
& 0.312 & 0.298 & 0.716 & 0.880
& 0.467 & 0.537 & 0.088 & 0.471 \\
\rowcolor{VanillaShade}
GRPO Vanilla & \(G=1\), step 300 & 20.7h
& 0.023 & 0.021 & 0.331 & 0.444
& 0.279 & 0.234 & 0.015 & 0.192 \\
\rowcolor{VanillaShade}
\(\quad+\)REINFORCE++ & \(G=1\), step 300 & 17.7h
& 0.277 & 0.271 & 0.662 & 0.826
& 0.408 & 0.487 & 0.088 & 0.431 \\
\rowcolor{BASShade}
\(\quad+\)BASIS & \(G=1\), step 150 & \phantom{0}8.3h
& 0.303 & 0.281 & 0.758 & 0.892
& 0.426 & 0.559 & 0.093 & 0.473 \\
\rowcolor{BASShade}
& \(\Delta\) vs \(G=8\) & \dtime{-7.2h}
& \dnear{-0.8} & \dnear{-1.7} & \dpos{+4.2} & \dpos{+1.2}
& \dnear{-4.0} & \dpos{+2.2} & \dpos{+0.4} & \dpos{+0.2} \\
\rowcolor{BASShade}
& \(\Delta\) vs Vanilla & \dtime{-12.4h}
& \dpos{+28.0} & \dpos{+26.0} & \dpos{+42.7} & \dpos{+44.8}
& \dpos{+14.7} & \dpos{+32.5} & \dpos{+7.8} & \dpos{+28.1} \\
\rowcolor{BASShade}
& \(\Delta\) vs REINFORCE++ & \dtime{-9.4h}
& \dpos{+2.6} & \dpos{+1.0} & \dpos{+9.5} & \dpos{+6.6}
& \dpos{+1.8} & \dpos{+7.3} & \dpos{+0.4} & \dpos{+4.2} \\

\addlinespace[0.4em]

GPG & \(G=8\), step 300 & 14.0h
& 0.369 & 0.286 & 0.788 & 0.880
& 0.426 & 0.570 & 0.123 & 0.492 \\
\rowcolor{VanillaShade}
GPG Vanilla & \(G=1\), step 300 & 18.6h
& 0.056 & 0.093 & 0.558 & 0.488
& 0.184 & 0.182 & 0.025 & 0.227 \\
\rowcolor{VanillaShade}
\(\quad+\)REINFORCE++ & \(G=1\), step 300 & 12.9h
& 0.272 & 0.281 & 0.748 & 0.874
& 0.426 & 0.558 & 0.082 & 0.463 \\
\rowcolor{BASShade}
\(\quad+\)BASIS & \(G=1\), step 150 & \phantom{0}6.4h
& 0.338 & 0.284 & 0.762 & 0.884
& 0.430 & 0.574 & 0.107 & 0.483 \\
\rowcolor{BASShade}
& \(\Delta\) vs \(G=8\) & \dtime{-7.6h}
& \dnear{-3.0} & \dnear{-0.2} & \dnear{-2.7} & \dpos{+0.4}
& \dpos{+0.4} & \dpos{+0.5} & \dnear{-1.6} & \dnear{-0.9} \\
\rowcolor{BASShade}
& \(\Delta\) vs Vanilla & \dtime{-12.2h}
& \dpos{+28.2} & \dpos{+19.2} & \dpos{+20.4} & \dpos{+39.6}
& \dpos{+24.6} & \dpos{+39.2} & \dpos{+8.2} & \dpos{+25.6} \\
\rowcolor{BASShade}
& \(\Delta\) vs REINFORCE++ & \dtime{-6.5h}
& \dpos{+6.7} & \dpos{+0.3} & \dpos{+1.4} & \dpos{+1.0}
& \dpos{+0.4} & \dpos{+1.6} & \dpos{+2.5} & \dpos{+2.0} \\

\addlinespace[0.4em]

GSPO & \(G=8\), step 300 & 14.4h
& 0.365 & 0.296 & 0.794 & 0.896
& 0.463 & 0.588 & 0.117 & 0.502 \\
\rowcolor{VanillaShade}
GSPO Vanilla & \(G=1\), step 300 & 13.0h
& 0.234 & 0.241 & 0.751 & 0.868
& 0.423 & 0.586 & 0.105 & 0.458 \\
\rowcolor{VanillaShade}
\(\quad+\)REINFORCE++ & \(G=1\), step 300 & 13.0h
& 0.350 & 0.265 & 0.794 & 0.908
& 0.441 & 0.599 & 0.108 & 0.495 \\
\rowcolor{BASShade}
\(\quad+\)BASIS & \(G=1\), step 150 & \phantom{0}7.2h
& 0.320 & 0.269 & 0.754 & 0.888
& 0.460 & 0.565 & 0.114 & 0.481 \\
\rowcolor{BASShade}
& \(\Delta\) vs \(G=8\) & \dtime{-7.2h}
& \dnear{-4.5} & \dnear{-2.7} & \dnear{-4.0} & \dnear{-0.8}
& \dnear{-0.4} & \dnear{-2.2} & \dnear{-0.3} & \dnear{-2.1} \\
\rowcolor{BASShade}
& \(\Delta\) vs Vanilla & \dtime{-5.8h}
& \dpos{+8.5} & \dpos{+2.8} & \dpos{+0.3} & \dpos{+2.0}
& \dpos{+3.7} & \dnear{-2.1} & \dpos{+0.8} & \dpos{+2.3} \\
\rowcolor{BASShade}
& \(\Delta\) vs REINFORCE++ & \dtime{-5.8h}
& \dnear{-3.0} & \dpos{+0.4} & \dnear{-4.0} & \dnear{-2.0}
& \dpos{+1.8} & \dnear{-3.4} & \dpos{+0.5} & \dnear{-1.4} \\

\bottomrule
\end{tabular}}
\caption{Online policy optimization experiments on Qwen3-4B. $G$ denotes the number of rollouts per prompt. BASIS with \(G{=}1\) is trained for $150$ steps and compared against three baselines trained for $300$ steps: (i) the $8$-rollout (\(G{=}8\)) GRPO-type algorithm, (ii) its vanilla single-rollout (\(G{=}1\)) variant, and (iii) another single-rollout variant combined with the REINFORCE++ global batch baseline. \textbf{Avg} is the unweighted average accuracy across the seven math benchmarks shown. Each BASIS row is followed by three \(\Delta\) rows comparing it against the three baselines, respectively. Accuracy differences are reported in percentage points. The \(\Delta\) entry under \emph{Time} gives the reduction in online learning wall-clock time achieved by BASIS. Green indicates that BASIS outperforms the baseline; gray indicates that BASIS is within 5 percentage points below the baseline.}
\label{tab:qwen3-main}
\end{table*}

For the OOD evaluation, we apply BASIS only to GRPO. Additional implementation details as well as the computational cost of BASIS for baseline estimation are provided in Appendix~\ref{app:online-implementation}. In Appendix~\ref{app:online-results}, we report an additional study in which we post-train Qwen2.5-Math-7B and compare BASIS against GRPO under the same protocol. 

\begin{table*}[t]
\centering
\small
\setlength{\tabcolsep}{4.0pt}
\renewcommand{\arraystretch}{1.10}
\resizebox{\textwidth}{!}{
\begin{tabular}{llcccccc}
\toprule
\textbf{Objective} &
\textbf{Setting} &
\textbf{HumanEval+} &
\textbf{MBPP+} &
\textbf{MMLU-Pro} &
\textbf{MGSM} &
\textbf{IFEval} &
\textbf{Avg} \\
\midrule

Base model & no RL
& 67.1 & 64.0 & 51.2 & 78.9 & 79.7 & 68.2 \\

\addlinespace[0.2em]

GRPO & \(G=8\), step 300
& 82.3 & 71.2 & 61.5 & 82.0 & 83.4 & 76.1 \\

\rowcolor{VanillaShade}
GRPO Vanilla & \(G=1\), step 300
& 57.9 & 53.4 & 47.8 & 69.1 & 76.3 & 60.9 \\

\rowcolor{VanillaShade}
\(\quad+\)REINFORCE++ & \(G=1\), step 300
& 73.8 & 66.9 & 56.3 & 80.5 & 78.9 & 71.3 \\

\rowcolor{BASShade}
\(\quad+\)BASIS & \(G=1\), step 150
& 82.3 & 69.8 & 61.8 & 82.6 & 80.2 & 75.3 \\

\rowcolor{BASShade}
& \(\Delta\) vs \(G=8\)
& \dnear{0.0} & \dnear{-1.4} & \dpos{+0.3} & \dpos{+0.6}
& \dnear{-3.2} & \dnear{-0.7} \\

\rowcolor{BASShade}
& \(\Delta\) vs Vanilla
& \dpos{+24.4} & \dpos{+16.4} & \dpos{+14.0} & \dpos{+13.5}
& \dpos{+3.9} & \dpos{+14.4} \\

\rowcolor{BASShade}
& \(\Delta\) vs REINFORCE++
& \dpos{+8.5} & \dpos{+2.9} & \dpos{+5.5} & \dpos{+2.1}
& \dpos{+1.3} & \dpos{+4.1} \\

\bottomrule
\end{tabular}}
\caption{OOD evaluation of GRPO variants trained on DAPO-Math-17K.}
\label{tab:ood-eval}
\end{table*}

\smallskip

\noindent \textbf{Finding 1. BASIS is competitive with 8-rollout methods while halving the online learning budget.} 
Table~\ref{tab:qwen3-main} reports the results on the seven in-domain mathematical reasoning benchmarks. Compared with 8-rollout GRPO, BASIS achieves a slightly higher average accuracy across the seven benchmarks, with a gain of {\bf 0.2} percentage points, and outperforms GRPO on {\bf 4} of the seven benchmarks. Compared with 8-rollout GPG and GSPO, BASIS is slightly lower on average, by {\bf 0.9} and {\bf 2.1} percentage points, respectively. Importantly, these results are obtained with half the online sampling budget: BASIS uses {\bf 76.8}K sampled responses, compared with {\bf 153.6}K for the 8-rollout baselines, reducing online learning wall-clock time by {\bf 7.2} to {\bf 7.6} hours across the three GRPO-type algorithms.

\smallskip 

\noindent \textbf{Finding 2. BASIS often outperforms single-rollout methods using roughly one-half of their online learning time}.
Table~\ref{tab:qwen3-main} further shows that BASIS improves the average accuracy across the seven benchmarks by {\bf 28.1} and {\bf 25.6} percentage points over single-rollout vanilla GRPO and GPG, respectively, and by {\bf 4.2} and {\bf 2.0} percentage points over their REINFORCE++ variants. These gains are positive on every individual benchmark, ranging from {\bf 0.3} to {\bf 44.8} percentage points. Importantly, BASIS achieves these improvements with substantially less training time, ranging from about {\bf half} the compute used by the REINFORCE++ variants to about {one-third} of that used by single-rollout GPG. 

Compared with single-rollout GSPO, the improvement is more modest. Using roughly {\bf 55\%} of the training time of the two GSPO single-rollout variants, BASIS improves the average accuracy over vanilla GSPO by {\bf 2.3} percentage points, but is {\bf 1.4} percentage points lower than its REINFORCE++ variant with half of the compute budget.

The OOD results reported in Table~\ref{tab:ood-eval} further confirm this finding: BASIS improves the average score by {\bf 14.4} percentage points over vanilla single-rollout GRPO and by {\bf 4.1} points over REINFORCE++, with consistent gains across all five OOD benchmarks.

\smallskip
\noindent \textbf{Finding 3. BASIS prevents collapse in single-rollout policy optimization}.  
A closer look into the learning curves in Figure~\ref{fig:qwen3-g1-vanilla-vs-basis} (Appendix~\ref{app:online-results}) reveals that vanilla GRPO and GPG collapse during training: vanilla GRPO barely learns, and its accuracy measure generally decreases over training. Vanilla GPG improves at first, but after peaking before the middle of training, its performance steadily declines. In contrast, BASIS does not suffer from this collapse. Together with the results in Section~\ref{sec:offline-diagnostics}, these findings suggest that sample-efficient value and advantage estimation stabilizes single-rollout policy optimization and helps prevent collapse.

\section{Discussion}
This paper introduces BASIS for rollout-efficient RLVR. Methodologically, BASIS samples only one rollout per prompt during online training, while leveraging batchwise information sharing to improve value and advantage estimation (Figure \ref{fig:online-weighting-cached-value}). Empirically, BASIS is highly sample efficient and robust for value function estimation, and often produces informative advantage estimates (Figure~\ref{fig:combined}). For policy optimization, BASIS often improves over single-rollout baselines, helps prevent collapse during training and achieves similar performance to multi-rollout baselines with substantially less computation (Table~\ref{tab:qwen3-main}). These gains also persist in out-of-domain evaluations covering code generation, general knowledge, multilingual reasoning, and instruction following (Table~\ref{tab:ood-eval}). 

Although our experiments in Section~\ref{sec:experiments} primarily use binary verifiable rewards, BASIS can also be extended to settings with continuous rewards. Appendix~\ref{app:continuous-rewards} discusses this extension and evaluates it in a reinforcement learning from human feedback (RLHF) setting on the Helpful and Harmless (HH) dataset \citep{bai2022hh}. It can be seen from Figure~\ref{fig:hh-rlhf-continuous} that BASIS attains higher training rewards than the single-rollout REINFORCE and REINFORCE++ baselines in the later stage of training, providing evidence that its benefits extend beyond binary reward RLVR settings.

\section*{Limitations}
This work has a few limitations. First, the BLUE and unbiasedness guarantees in Proposition~\ref{proBLUB} require the initial value estimates to be independent of the data batch used for batchwise refinement. These guarantees therefore do not directly apply when the initial estimates are adaptively calibrated using the same batch. A simple way to preserve the required independence is to calibrate the initial value estimates using the preceding batch instead. We detail this variant in Appendix~\ref{app:online-implementation}. 
Second, BASIS improves value and advantage estimation by borrowing information across prompts within the same training batch. A complementary line of work borrows information for the same prompt across different training steps \citep{wang2025kalman,xu2025single,gong2026kernelized,ruan2026spo++}. These approaches could potentially be combined with BASIS to share information both across prompts within a batch and across training iterations, further improving the post-training algorithm's sample efficiency. 

\section*{Acknowledgements}
This work was supported by the UK Government's AI Research Resource (AIRR) programme and undertaken using the Isambard-AI system.

\bibliographystyle{apalike}
\bibliography{reference}

\clearpage
\appendix

\section*{Appendix}

\section{Experimental Details for Value Estimation}
\label{app:baseline-diagnostic-details}

This section details the value estimation experiments  in
Section~\ref{sec:offline-diagnostics}.  All experiments are conducted at a fixed
GRPO-trained checkpoint \(\pi_{\theta_t}\), initialized from Qwen2.5-Math-7B and optimized on the Level 3--5 subset of the MATH dataset \citep{hendrycksmath2021}. For each repeat, we sample a batch of prompts and estimate each prompt's oracle value \(V_{i,t}\) using 256 Monte Carlo rollouts from the checkpoint. All value estimators are then constructed from a separate set of reward samples from the same checkpoint. 

More specifically, we fix the batch size at \(B=64\) and vary the number of rollouts per prompt \(G\in\{1,2,4,8\}\) for GRPO and \(G\in\{2,4,8\}\) for RLOO (which requires at least two rollouts to leave one out), while BASIS is evaluated in the single-rollout setting with \(G=1\).   For each repeat, we draw a batch of prompts without replacement and sample up to eight rewards per prompt.  To ensure a fair comparison across different values of $G$, the value estimator for each $G$ is computed using the first $G$ rewards from the same set of rollouts.   The reported metric is the mean squared error \((\widehat V_{i,t}-V_{i,t})^2\), averaged over prompts and then over 10 repeats.  

\paragraph{Baseline algorithms.}
We compare BASIS with three baseline algorithms:
\begin{itemize}[leftmargin=*]
    \item \textbf{GRPO} uses the within-prompt group reward mean as its baseline:
    \[
    b^{\mathrm{GRPO}}_{i,g}
    =
    \bar r_i
    :=
    \frac{1}{G}\sum_{\ell=1}^{G} r_{i,\ell}
    \]

    \item \textbf{RLOO} uses a leave-one-out group baseline, where the baseline
    for each sample is computed from the other completions generated from the
    same prompt:
    \[
    b^{\mathrm{RLOO}}_{i,g}
    =
    \bar r_{i,-g}
    :=
    \frac{1}{G-1}\sum_{\ell\ne g} r_{i,\ell},
    \]

    \item \textbf{REINFORCE++} uses the global reward mean within the current batch
    as its baseline:
    \[
     b^{\mathrm{R++}}_{i}
    =
    \bar r_{\mathrm{batch}}
    :=
    \frac{1}{B}\sum_{j=1}^{B}r_{j},
    \]
\end{itemize}


\paragraph{Effect of batch heterogeneity.}
We examine how the accuracy of the value estimator varies with the heterogeneity of the sampled prompt batch. We fix \(B=64\) and sample 500 batches to ensure sufficient diversity across the sampled prompt batches. For each batch \(b\), we define its heterogeneity score as
$s_b:=\operatorname{std}_{i\in b}(V_{i,t}),$
which is the standard deviation of the 64 oracle prompt values in that batch.
The minimum and maximum observed heterogeneity scores across the 500 batches are \(0.206\) and \(0.307\), and we divide this interval into five bins with uniformly spaced boundaries: 
\([0.206,0.226)\), \([0.226,0.246)\), \([0.246,0.266)\),
\([0.266,0.287)\), and \([0.287,0.307]\). Within each bin, we report the MSE of each value estimator, first averaged over prompts within each batch and then averaged across batches in the bin.

\paragraph{Effect of prompt difficulty.}
We further evaluate how the accuracy of the value estimator varies across different prompt-difficulty levels. For each repeat, we first sample a batch of \(B=64\) prompts and compute the oracle value estimate per prompt. We next stratify all prompts by their oracle values into five difficulty bins: \([0,0.2)\), \([0.2,0.4)\), \([0.4,0.6)\), \([0.6,0.8)\), and \([0.8,1]\). Within each bin, we similarly report the MSE of each value estimator, averaged over prompts within that bin and across 10 repeats.

\begin{figure}[t]
\centering
\includegraphics[width=0.48\textwidth]{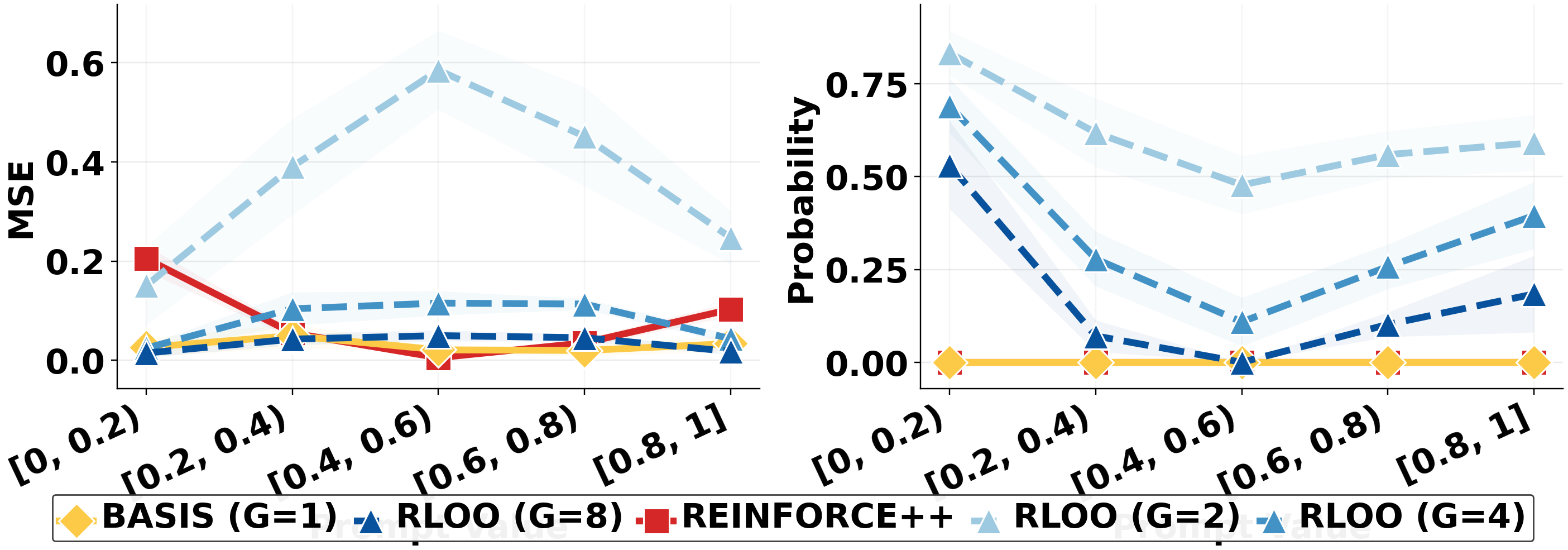}
\caption{MSE of RLOO value estimator (left) and frequency with which its estimated baseline is exactly 0 or 1 (right). Results for BASIS and REINFORCE++ are also reported for completeness.}
\label{fig:combined_rloo}
\end{figure}

\paragraph{Sensitivity analysis.} We evaluate the sensitivity of BASIS to the active-set threshold $\epsilon$, the number of offline reference samples $n$, the online estimation batch size $B$, and the number of candidate values in the $\beta$ grid. Following the value estimation experiments in Section~\ref{sec:offline-diagnostics}, we use Qwen2.5-Math-7B checkpoints from the early, middle, and late stages of training and compute oracle values using 256 held-out rollouts. We vary one hyperparameter at a time while holding all others at their default values ($\epsilon=10^{-6}$, $n=64$, and $B=64$, together with the default $\beta$ grid). 

\begin{figure*}[t]
\centering
\includegraphics[width=\textwidth]{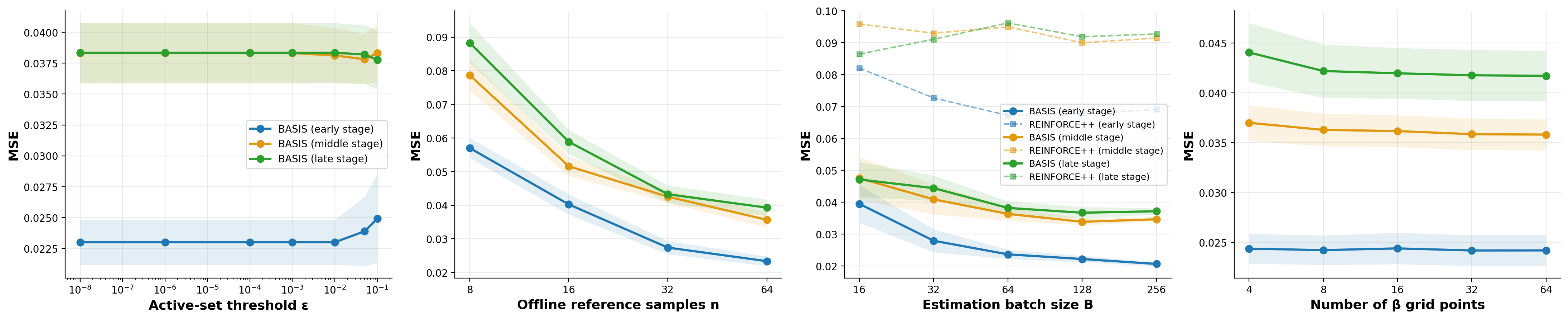}
\caption{MSE of BASIS value estimator at early, middle, and late training stages in the sensitivity analysis. Shaded regions are 95\% confidence intervals. The batch-size panel also reports the performance of REINFORCE++ for reference.}
\label{fig:basis-sensitivity}
\end{figure*}

Figure~\ref{fig:basis-sensitivity} shows that BASIS is stable over a broad range of hyperparameter choices. The MSE is nearly unchanged for $\epsilon\in[10^{-8},10^{-2}]$, and using between 8 and 64 candidate $\beta$ values produces similar results. Increasing the offline cache from $n=8$ to $n=64$ reduces MSE by approximately 55--59\% across the three training stages. Increasing $B$ improves estimation up to about $B=64$--$128$, beyond which the improvement is marginal.

\section{Alternative Information-Sharing Approaches}
\label{app:supplementary-baselines}

In Section~\ref{sec:method-section}, we construct the BASIS baseline as a weighted average of rewards from the same batch, with data-adaptive weights chosen to form a best linear unbiased baseline; we refer to this main estimator as UNB for unbiasedness. More broadly, batchwise information can be shared in other ways, such as using weighted averages without the unbiasedness constraint, or using importance weights based on ratios of initial value estimates. In this section, we study two such alternatives, namely, the Variance-OPtimized shrinkage baseline (VOP) and the Ratio-average Value-Guided baseline (RVG). Using the Qwen2.5-Math-7B checkpoints, single-rollout VOP and RVG perform comparably to UNB and match or outperform GRPO with $G{=}8$ rollouts on most math benchmarks, while using roughly half the online training time (see Appendix~\ref{app:online-results}, Table~\ref{tab:qwen25-main}).

Before defining each variant, we note that both approaches reuse the same offline initial value estimator and the same calibration method for selecting $\beta_t$ as UNB does. We keep similar notations from
Section~\ref{sec:method-section}:
$\widehat V_{i,t}=\widehat V_{\beta_t}(x_i)$,
$\widehat\sigma_i^2=\widehat V_{i,t}(1-\widehat V_{i,t})$, and
$\mathcal A_t$ is the active set in Algorithm~\ref{alg:bas}. The
formulas below apply only to active prompts; prompts outside
$\mathcal A_t$ use the zero baseline.
For reference, the UNB proposed in the main text can be written as
\begin{equation}
\widetilde V_{i,t}^{\mathrm{UNB}}
=
\widehat V_{i,t}\,
\frac{\displaystyle
\sum_{j\in\mathcal A_t\setminus\{i\}}
\widehat V_{j,t} r_j/\widehat\sigma_j^2}
{\displaystyle
\sum_{j\in\mathcal A_t\setminus\{i\}}
\widehat V_{j,t}^2/\widehat\sigma_j^2},
\qquad i\in\mathcal A_t,
\label{eq:app-unb-baseline}
\end{equation}
by plugging $\widehat V_{i,t}$ and $\widehat\sigma_i^2$ into
Eq.~\eqref{eqn:wij}. 

Both variants detailed below share the same idea of weighting the batch rewards by the offline values $\widehat{V}_{i,t}$, and they differ in the weights.

\paragraph{Variance-OPtimized shrinkage baseline (VOP).}
VOP drops the unbiasedness constraint adopted by UNB. The following proposition derives VOP's weights; its proof is given in Appendix~\ref{app:proof}.

\begin{proposition}\label{prop:vop}
Fix a training step \(t\), a target prompt \(i\), and condition on the
prompt batch \(\mathcal X_B:=\{x_k\}_{k=1}^B\). Among all leave-one-out linear baselines
\(b_i=\sum_{j\ne i}w_{ij}r_j\), without imposing the unbiasedness
constraint, the weights minimizing
\(\mathbb E[(V_{i,t}-b_i)^2\mid\mathcal X_B]\) are
\[
w_{ij}^{\mathrm{VOP}}
=
\frac{V_{i,t}V_{j,t}/\sigma_j^2}
{1+\sum_{k\ne i}V_{k,t}^2/\sigma_k^2},
\qquad j\ne i .
\]
\end{proposition}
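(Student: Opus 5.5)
Conditional on the prompt batch $\mathcal X_B$, the rewards $r_1,\dots,r_B$ are independent, with $\mathbb E[r_j\mid\mathcal X_B]=V_{j,t}$ and $\operatorname{Var}(r_j\mid\mathcal X_B)=\sigma_j^2$. This holds because each prompt gets one rollout sampled independently from $\pi_{\theta_t}$. I will use it to write the conditional MSE of $b_i=\sum_{j\ne i}w_{ij}r_j$ as a bias--variance decomposition that is quadratic in the weight vector $w=(w_{ij})_{j\ne i}$:
\[
\mathbb E[(V_{i,t}-b_i)^2\mid\mathcal X_B]
=\Bigl(V_{i,t}-\sum_{j\ne i}w_{ij}V_{j,t}\Bigr)^2+\sum_{j\ne i}w_{ij}^2\sigma_j^2 .
\]
The cross terms vanish by independence. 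Here $V_{i,t}$ is a deterministic constant given $\mathcal X_B$, and the leave-one-out structure means $r_i$ never enters $b_i$.

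Next, I note that the objective is strictly convex in $w$, provided all $\sigma_j^2>0$, which is implicitly assumed since the formula divides by $\sigma_j^2$. So the minimizer is characterized by the first-order conditions. Writing $c:=\sum_{k\ne i}w_{ik}V_{k,t}$, differentiating with respect to $w_{ij}$ gives
\[
-2V_{j,t}(V_{i,t}-c)+2w_{ij}\sigma_j^2=0,
\]
hence $w_{ij}=(V_{i,t}-c)V_{j,t}/\sigma_j^2$. The weights are therefore proportional to $V_{j,t}/\sigma_j^2$ with a common scalar $\lambda:=V_{i,t}-c$.

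To determine $\lambda$ self-consistently, I substitute back into the definition of $c$. Setting $S:=\sum_{k\ne i}V_{k,t}^2/\sigma_k^2$ gives $c=\lambda S$, so $\lambda=V_{i,t}-\lambda S$ and $\lambda=V_{i,t}/(1+S)$. This yields exactly the stated $w^{\mathrm{VOP}}_{ij}$.

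The only real care point is justifying global optimality and uniqueness. This follows because the Hessian, $2(vv^\top+\operatorname{diag}(\sigma_j^2))$ with $v=(V_{j,t})_{j\ne i}$, is positive definite. Alternatively, one can view the problem as ridge regression: minimizing $(V_{i,t}-v^\top w)^2+w^\top D w$ gives $w=(vv^\top+D)^{-1}vV_{i,t}$, and Sherman--Morrison yields the same closed form. As a sanity check, I will compare with Proposition~\ref{proBLUB}: dropping the unbiasedness constraint simply adds the $1$ in the denominator, a shrinkage toward zero.
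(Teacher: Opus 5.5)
Your proposal is correct and follows essentially the same route as the paper's proof. Both use the bias--variance decomposition of the conditional MSE, the first-order condition giving $w_{ij}\propto V_{j,t}/\sigma_j^2$ with a common scalar, and the self-consistent solve yielding that scalar as $V_{i,t}/(1+\sum_{k\ne i}V_{k,t}^2/\sigma_k^2)$; your explicit Hessian $2(vv^\top+\operatorname{diag}(\sigma_j^2))$ and Sherman--Morrison check just make the paper's strict-convexity uniqueness claim more concrete.
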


By Proposition~\ref{prop:vop}, for each active prompt $i$, we define the VOP estimator by
\begin{equation*}
\widetilde V_{i,t}^{\mathrm{VOP}}
=
\widehat V_{i,t}\,
\frac{\displaystyle\sum_{j\in\mathcal A_t\setminus\{i\}}\widehat V_{j,t} r_j/\widehat\sigma_j^2}
{\displaystyle 1+\sum_{j\in\mathcal A_t\setminus\{i\}}\widehat V_{j,t}^2/\widehat\sigma_j^2},
\quad i\in\mathcal A_t .
\end{equation*}
Compared to the UNB estimator in Eq.~\eqref{eq:app-unb-baseline}, VOP includes an additional $1$ in the denominator, which shrinks the estimator toward zero, trading unbiasedness for lower MSE.

\paragraph{Ratio-average Value-Guided baseline (RVG).}
RVG computes the value estimator by 
\begin{equation*}
\widetilde V_{i,t}^{\mathrm{RVG}}
=
\frac{1}{|\mathcal A_t|-1}
\sum_{j\in\mathcal A_t\setminus\{i\}}
\frac{\widehat V_{i,t}}{\widehat V_{j,t}}r_j,
\quad i\in\mathcal A_t.
\end{equation*}
It can be viewed as an importance sampling estimator, where the ratio $\widehat V_{i,t}/\widehat V_{j,t}$ reweights rewards from other prompts to borrow information. 

%

\section{Additional Online Learning Results}
\label{app:online-results}



\begin{table}[h]
\centering
\small
\setlength{\tabcolsep}{4pt}
\renewcommand{\arraystretch}{1.1}
\begin{tabular}{llccc}
\toprule
\textbf{Objective} & \textbf{Setting} & \textbf{Step} & \textbf{Responses} & \textbf{Time} \\
\midrule
GRPO              & \(G=8\) & 300 & 153.6K          & 15.5h \\
GRPO Vanilla      & \(G=1\) & 300 & 153.6K          & 20.7h \\
\(\quad+\)REINFORCE++ & \(G=1\) & 300 & 153.6K      & 17.7h \\
\(\quad+\)BASIS   & \(G=1\) & 150 & \phantom{0}76.8K & \phantom{0}8.3h \\
\(\quad+\)BASIS   & \(G=1\) & 300 & 153.6K          & 16.6h \\
\addlinespace[0.2em]
GPG               & \(G=8\) & 300 & 153.6K          & 14.0h \\
GPG Vanilla       & \(G=1\) & 300 & 153.6K          & 18.6h \\
\(\quad+\)REINFORCE++ & \(G=1\) & 300 & 153.6K      & 12.9h \\
\(\quad+\)BASIS   & \(G=1\) & 150 & \phantom{0}76.8K & \phantom{0}6.4h \\
\(\quad+\)BASIS   & \(G=1\) & 300 & 153.6K          & 12.7h \\
\addlinespace[0.2em]
GSPO              & \(G=8\) & 300 & 153.6K          & 14.4h \\
GSPO Vanilla      & \(G=1\) & 300 & 153.6K          & 13.0h \\
\(\quad+\)REINFORCE++ & \(G=1\) & 300 & 153.6K      & 13.0h \\
\(\quad+\)BASIS   & \(G=1\) & 150 & \phantom{0}76.8K & \phantom{0}7.2h \\
\(\quad+\)BASIS   & \(G=1\) & 300 & 153.6K          & 14.3h \\
\bottomrule
\end{tabular}
\caption{Computation time for fine-tuning Qwen3-4B models. Both single-rollout and multi-rollout  baseline algorithms are trained for $300$ steps, corresponding to approximately $1.1$ passes over the DAPO-Math-17K training set for 8-rollout methods and approximately $8.8$ passes for single-rollout methods. BASIS is trained for $150$ steps.}
\label{tab:qwen3-compute}
\end{table}

\begin{figure*}[t]
\centering
\includegraphics[width=\textwidth]{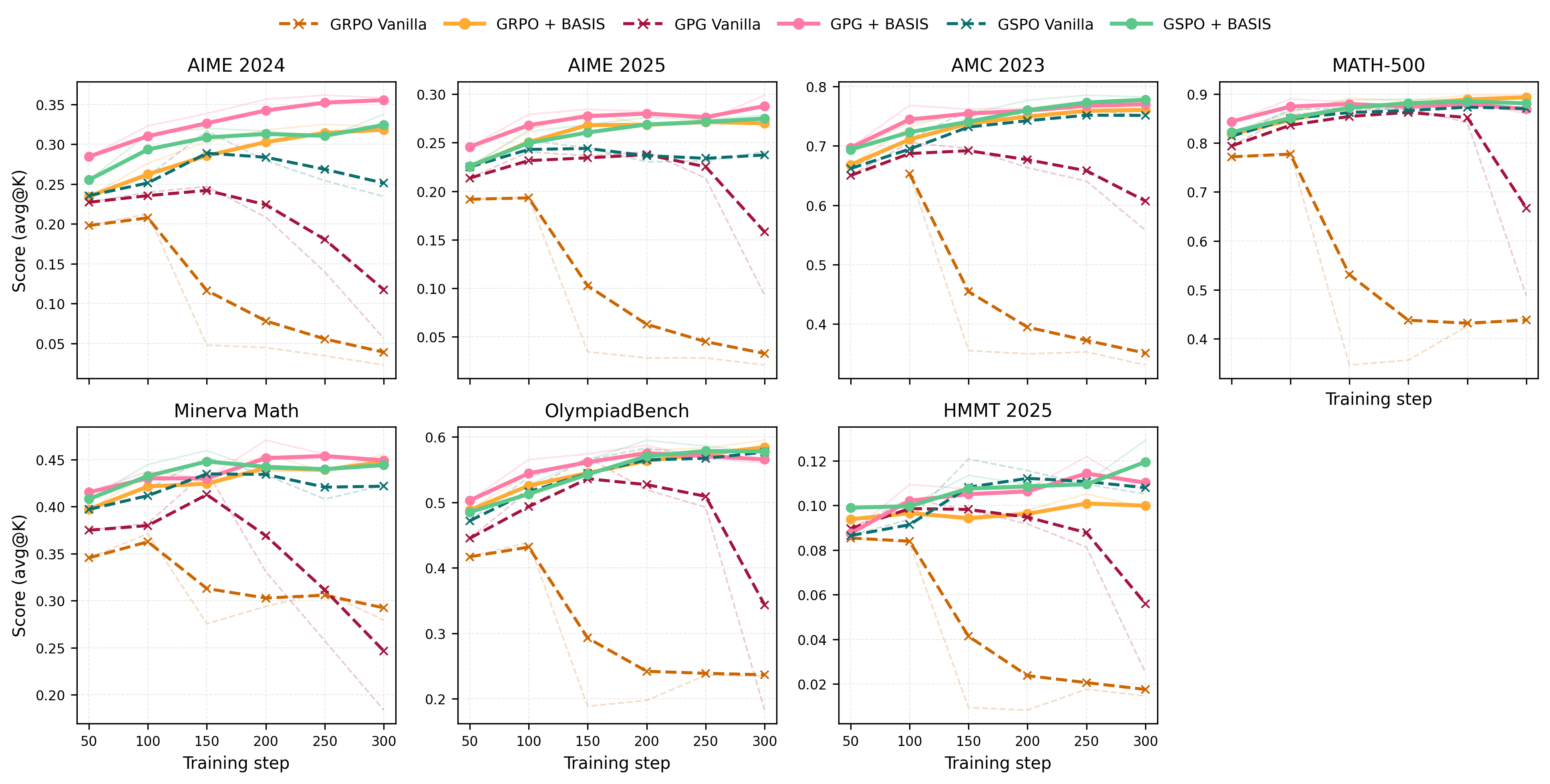}
\caption{Accuracy of Qwen3-4B models fine-tuned by 
\textit{Vanilla} (with a zero baseline) and\ \textit{BASIS} coupled with GRPO (orange), GPG (red), GSPO (teal) across seven math benchmarks. Solid lines are exponentially smoothed with a window of $3$; lighter curves in the same colors show the raw scores.}
\label{fig:qwen3-g1-vanilla-vs-basis}
\end{figure*}

Section \ref{sec:experiments} reports BASIS's performance at step 150. In this section, we evaluate its performance over steps $150$--$300$ and compare against the vanilla single-rollout algorithms. The results are visualized in 
Figure~\ref{fig:qwen3-g1-vanilla-vs-basis}. We observe that vanilla GRPO and GPG collapse after approximately step 150, with their performance deteriorating substantially as training progresses. 
Only vanilla GSPO survives beyond this point. BASIS lifts every objective onto a
monotonically improving trajectory. 
Figure~\ref{fig:qwen3-g1-rfpp-vs-basis} compares BASIS against a stronger single-rollout baseline, 
REINFORCE++. It can be seen that BASIS
dominates uniformly across all seven benchmarks, with a visible gap from Step $100$ onward. 


\begin{figure*}[t]
\centering
\includegraphics[width=\textwidth]{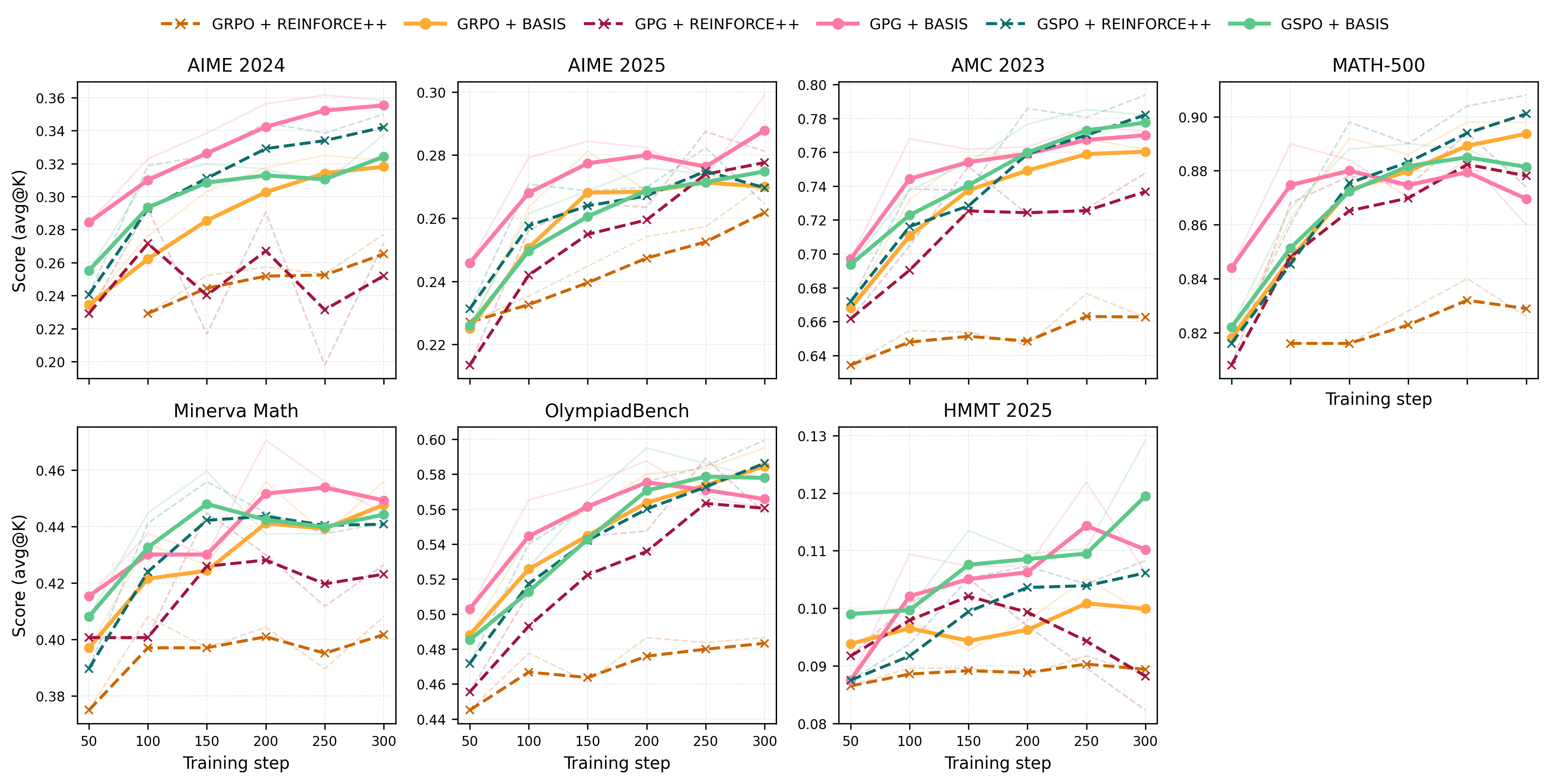}
\caption{Accuracy of Qwen3-4B models fine-tuned by 
\textit{REINFORCE++} (with a global batch baseline) and \textit{BASIS} coupled with GRPO (orange), GPG (red), GSPO (teal) across seven math benchmarks. 
The rest is the same as in Figure~\ref{fig:qwen3-g1-vanilla-vs-basis}.}
\label{fig:qwen3-g1-rfpp-vs-basis}
\end{figure*}

In addition to the experiments on Qwen3-4B, we apply the same online RL protocol to Qwen2.5-Math-7B, compare the three BASIS weighting variants described in Appendix  \ref{app:supplementary-baselines} and report the results in Table \ref{tab:qwen25-main}. We observe that the three BASIS variants match or outperform GRPO on five of the seven benchmarks, despite being trained for only 150 steps, compared with 300 steps for GRPO: UNB performs best on AMC~2023 and MATH-500, RVG on AIME~2024 and OlympiadBench, and VOP on AIME~2025. These results demonstrate that BASIS generalizes across models (Qwen2.5-Math-7B and Qwen3-4B) and across different weighting variants, suggesting that the benefits of information sharing are not tied to a particular choice of weights and models.

\begin{table}[h]
\centering
\small
\setlength{\tabcolsep}{4pt}
\renewcommand{\arraystretch}{1.30}
\resizebox{\columnwidth}{!}{
\begin{tabular}{lccccccc}
\toprule
\textbf{Method} &
\makecell{\textbf{AIME}\\\textbf{2024}} &
\makecell{\textbf{AIME}\\\textbf{2025}} &
\makecell{\textbf{AMC}\\\textbf{2023}} &
\makecell{\textbf{MATH}\\\textbf{-500}} &
\makecell{\textbf{Minerva}\\\textbf{Math}} &
\makecell{\textbf{Olympiad}\\\textbf{Bench}} &
\makecell{\textbf{HMMT}\\\textbf{2025}} \\
\midrule
GRPO \((G=8)\)
& 0.244 & 0.102 & 0.597 & 0.762
& \best{0.324} & 0.406 & \best{0.019} \\
\rowcolor{BASShade}
\(\quad+\)BASIS-UNB \((G=1)\)
& \second{0.245} & 0.107 & \best{0.655} & \best{0.780}
& \second{0.309} & 0.409 & \best{0.019} \\
\rowcolor{BASShade}
\(\quad+\)BASIS-RVG \((G=1)\)
& \best{0.256} & \second{0.117} & 0.622 & \second{0.773}
& 0.298 & \best{0.417} & 0.015 \\
\rowcolor{BASShade}
\(\quad+\)BASIS-VOP \((G=1)\)
& 0.240 & \best{0.118} & \second{0.634} & 0.768
& 0.303 & \second{0.416} & \second{0.018} \\
\bottomrule
\end{tabular}}
\caption{Online policy optimization experiments on Qwen2.5-Math-7B. Similar to the experiments on Qwen3-4B, BASIS is trained for 150
steps whereas GRPO is trained for 300 steps. All algorithms sample $512$ rollouts per training step, so BASIS methods train
on eight times as many different prompts per step as 8-rollout GRPO. UNB is the main algorithm presented in the main paper (Table~\ref{tab:qwen3-main}); RVG and VOP variants are defined in
Appendix~\ref{app:supplementary-baselines}. Best result in each column is
bolded, and second best is underlined.}
\label{tab:qwen25-main}
\end{table}


\section{Implementation Details for Policy Optimization}
\label{app:online-implementation}

Before online RL, we sample $n$ completions from the reference
policy on each training prompt and score them with the same verifier
used for online rewards. For each prompt $x_i$, the cache stores the
offline sample size $n_i = n$ and the empirical mean reward $\hat p_i$. We
set $n=64$ for the Qwen3-4B experiments and $n=32$ for the Qwen2.5-Math-7B experiments. In general, we recommend using a comparable offline sample size, subject to the available offline sampling budget. 

For binary rewards, our estimated value takes the following form,
\[
\widehat V_\beta(x_i) = \frac{\hat p_i \exp({1/\beta})}{1-\hat p_i + \hat p_i \exp({\,1/\beta})},
\]
given a regularization parameter $\beta$. We evaluate $\widehat V_\beta(x_i)$ on a fixed grid of $230$
values of $\beta$ spanning $[0.01, 5.0]$: 200 values with a step size of $0.01$ over $[0.01, 2.00]$, followed by 30 values with a step size of $0.1$ over $[2.1, 5.0]$. This yields a $17{,}398 \times 230$ lookup table, occupying approximately $16$ MB on disk. Figure~\ref{fig:basis-beta-traj} visualizes the resulting calibrated $\beta$ values over the course of online training.

The offline rollout process takes a few hours on the same four-GPU node used for training, while the subsequent closed-form evaluation of $\widehat V_\beta$ takes only seconds on a single CPU. Once computed, the initial estimator can be reused across all three policy objectives and all three weighting variants (UNB / RVG / VOP). Since this precomputation is shared across the BASIS settings, its effective per-experiment overhead is well below $5\%$ of the training wall-clock time.

\begin{figure}[t]
\centering
\includegraphics[width=\columnwidth]{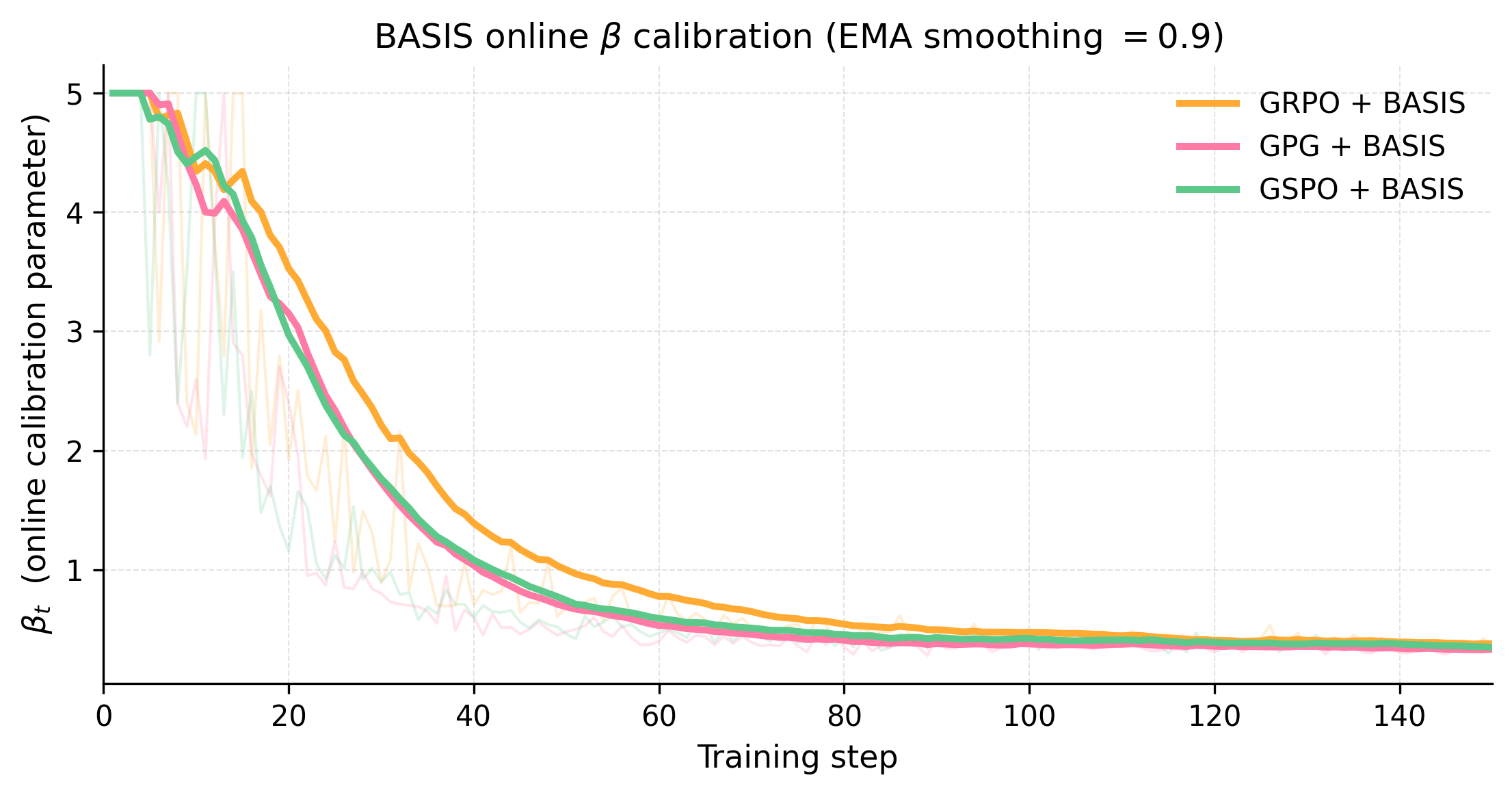}
\caption{Calibrated values of $\beta_t$ during BASIS training on Qwen3-4B when coupled with GRPO, GPG, and GSPO. At each training step the
calibrator selects $\beta_t$ by minimizing
Eq.~\eqref{eq:online-beta-selection} over the precomputed $230$-point
grid $\beta\in[0.01, 5.0]$. All three runs start near the upper end of the grid and settle into a narrow band around $\beta\approx 0.4$ by approximately step 80, where they remain for the rest of training. Solid lines show the smoothed calibrated $\beta_t$ values; lighter curves in the same colors show the corresponding raw  selected $\beta_t$ values.}
\label{fig:basis-beta-traj}
\end{figure}

For online policy learning, we adopt the implementation from \texttt{verl}~\citep{sheng2024hybridflow}, replacing only its advantage estimator with BASIS's estimator. 
At each calibration step, BASIS chooses $\beta_t$ by minimizing
Eq.~\eqref{eq:online-beta-selection} over the precomputed grid. This amounts to minimizing over the 230 precomputed grid values for each of the 512 active prompts, costing less than a millisecond per step on a single GPU. 
As an alternative to using the same data batch for both batchwise refinement and calibration, one can select the calibration parameter using the preceding online batch to preserve the unbiasedness of the value estimate: 
\begin{align*}
\mathcal L_s(\beta)
&=\frac{1}{|\mathcal A_{s,\beta}|}
\sum_{i\in\mathcal A_{s,\beta}}
\bigl(r_i^{(s)}-\widetilde V_{i,s,\beta}\bigr)^2,\\
\beta_t^{\mathrm{PB}}
&\in \arg\min_{\beta\in\mathbb B}\mathcal L_{t-1}(\beta),
\qquad
b_i^{(t)}=\widetilde V_{i,t,\beta_t^{\mathrm{PB}}}.
\end{align*}
Prompts whose selected offline value falls outside the active set use the zero baseline; all other prompts use the cross-prompt BASIS baseline. We set $\epsilon = 10^{-6}$ for the active set threshold throughout. 

Finally, the Qwen2.5-Math-7B experiments use max prompt length $1024$, max
response length $2048$, learning rate $10^{-6}$, rollout temperature
$1.0$, top-$p{=}1.0$, top-$k{=}-1$, DAPO-Math-17K training data \citep{yu2025dapo}, on four GH200 GPUs. The Qwen3-4B experiments use the
same learning rate and temperature with max response length $4096$,
on four GH200 GPUs. In both cases, the rollout budget is $512$ per
online training step. PPO mini-batch size is $256$ for single-rollout algorithms and $32$ for 8-rollout algorithms, respectively, so that each training step performs $2$ PPO inner updates. We did not implement KL regularization, following the recent consensus \citep{yu2025dapo}.

\section{Extensions to Continuous Rewards}
\label{app:continuous-rewards}
BASIS requires to estimate the reward variance for implementing batchwise refinement. 
For binary rewards, we can use the plug-in estimate
$\widehat\sigma_i^2=\widehat V_{i,t}(1-\widehat V_{i,t})$. For a continuous scalar reward, however, the reward variance cannot be recovered from its value function alone. We instead estimate the conditional second moment
$M_{2,j,t}:=\mathbb E[r_j^2\mid x_j]$ to derive the reward variance
\begin{equation*}
\sigma_j^2=M_{2,j,t}-V_{j,t}^2.
\end{equation*}

This additional quantity can be obtained from the same offline reference-policy rollouts used to estimate the value. Indeed, applying the exponential-tilting identity in Proposition~\ref{prop:soft-value} to the first and second reward moments gives
\begin{align*}
V_\beta^*(x)
&=
\frac{\mathbb E_{\pi_{\mathrm{ref}}}
\!\left[r(x,y)e^{r(x,y)/\beta}\mid x\right]}
{\mathbb E_{\pi_{\mathrm{ref}}}
\!\left[e^{r(x,y)/\beta}\mid x\right]},\\
M_{2,\beta}^*(x)
&=
\frac{\mathbb E_{\pi_{\mathrm{ref}}}
\!\left[r(x,y)^2e^{r(x,y)/\beta}\mid x\right]}
{\mathbb E_{\pi_{\mathrm{ref}}}
\!\left[e^{r(x,y)/\beta}\mid x\right]}.
\end{align*}

Suppose the offline cache contains $M$ reference-policy rewards $r_{j1},\ldots,r_{jM}$ for prompt $x_j$, and let
$q_{jm}(\beta)=\exp(r_{jm}/\beta)$, we compute
\begin{align*}
\widehat V_\beta(x_j)
&=
\frac{\sum_{m=1}^{M}q_{jm}(\beta)r_{jm}}
{\sum_{m=1}^{M}q_{jm}(\beta)},\\
\widehat M_{2,\beta}(x_j)
&=
\frac{\sum_{m=1}^{M}q_{jm}(\beta)r_{jm}^2}
{\sum_{m=1}^{M}q_{jm}(\beta)},\\
\widehat\sigma_\beta^2(x_j)
&=
\max\!\left\{
\widehat M_{2,\beta}(x_j)-\widehat V_\beta(x_j)^2,
\delta
\right\},
\end{align*}
where $\delta>0$ is a small constant. 

At each online step $t$, we set
$\widehat V_{j,t,\beta}=\widehat V_\beta(x_j)$ and
$\widehat\sigma_{j,t,\beta}^2=\widehat\sigma_\beta^2(x_j)$ and substitute these estimates into the formula from Proposition~\ref{proBLUB}, yielding: 
\begin{align}
\widehat w_{ij,t,\beta}
&=
\frac{
\widehat V_{i,t,\beta}\widehat V_{j,t,\beta}/
\widehat\sigma_{j,t,\beta}^2
}{
\displaystyle
\sum_{k\ne i}
\widehat V_{k,t,\beta}^2/
\widehat\sigma_{k,t,\beta}^2
},
\notag\\
\widetilde V_{i,t,\beta}
&=
\sum_{j\ne i}\widehat w_{ij,t,\beta}r_j.
\label{eq:continuous-basis}
\end{align}
If the denominator in Eq.~\eqref{eq:continuous-basis} is zero, we use the same zero baseline as in the binary reward setting. Finally, $\beta_t$ is selected from the current online batch by Eq.~\eqref{eq:online-beta-selection}.


We further implement this extension using the HH dataset, which contains preferred and rejected responses. We partition the data into three disjoint subsets, containing 5,000 preference pairs for training a reward model, 2,000 preferred responses for supervised fine-tuning, and 2,048 prompts for RL policy optimization. For each subset, we reserve 5\% of the data for validation.  

We initialize the policy and reward model from Qwen2.5-1.5B-Instruct, fine-tune the policy on preferred responses for 200 steps, and train a Bradley--Terry reward model on paired preferences for 1,000 steps. For RL policy optimization, all methods use maximum prompt and response lengths of 512 tokens, sampling temperature 1.0, learning rate $5\times10^{-7}$. BASIS constructs its offline prompt values from 32 responses per prompt sampled from the supervised policy and scored by the reward model. We standardize the reward-model scores using the mean and standard deviation computed from these offline scores, subtracting the offline mean and dividing by the standard deviation of these scores. Figure~\ref{fig:hh-rlhf-continuous} reports the resulting standardized rewards. We observe that BASIS achieves substantially higher rewards than both REINFORCE and REINFORCE++ at later stages of training.

\begin{figure}[t]
    \centering
    \includegraphics[width=\columnwidth]{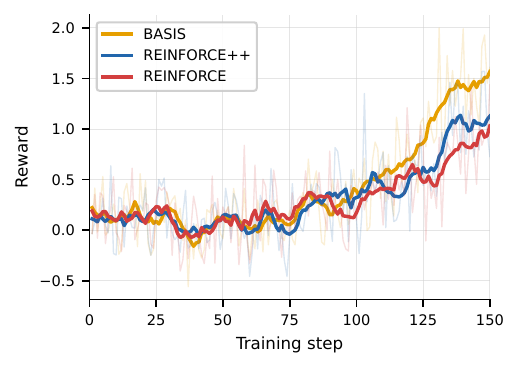}
    \caption{Training rewards on the HH dataset for BASIS and the single-rollout REINFORCE and REINFORCE++ baselines.}
    \label{fig:hh-rlhf-continuous}
\end{figure}

\section{Proofs}
\label{app:proof}

\subsection{Proof of Proposition~\ref{proBLUB}}
\begin{proof}
Fix a training step \(t\), a target prompt \(i\), and condition on the prompt
batch \(\mathcal X_B:=\{x_k\}_{k=1}^B\). Under the independent rollout
sampling used by BASIS, the rewards from different prompts
are conditionally uncorrelated given the prompt batch. Hence, for any
leave-one-out linear baseline \(b_i=\sum_{j\ne i}w_{ij}r_j\),
\[
\mathbb E[b_i\mid \mathcal X_B]
=
\sum_{j\ne i}w_{ij}V_{j,t}.
\]
The unbiasedness constraint is therefore
\[
\sum_{j\ne i}w_{ij}V_{j,t}=V_{i,t}.
\]
On this constrained set, the conditional MSE equals the conditional variance:
\[
\begin{aligned}
\mathbb E[(V_{i,t}-b_i)^2\mid \mathcal X_B]
&=
\operatorname{Var}(b_i\mid \mathcal X_B)\\
&=
\sum_{j\ne i}w_{ij}^2\sigma_j^2 .
\end{aligned}
\]
Thus the optimal weights solve the quadratic program
\[
\begin{aligned}
\min_{(w_{ij})_{j\ne i}}\quad
&\sum_{j\ne i}w_{ij}^2\sigma_j^2\\
\text{s.t.}\quad
&\sum_{j\ne i}w_{ij}V_{j,t}=V_{i,t} .
\end{aligned}
\]
In the non-degenerate case \(\sum_{k\ne i}V_{k,t}^2/\sigma_k^2>0\), the
Lagrangian
\[
\begin{aligned}
\mathcal L(w_i,\lambda)
=
\sum_{j\ne i}w_{ij}^2\sigma_j^2
+\lambda\Bigl(\sum_{j\ne i}w_{ij}V_{j,t}-V_{i,t}\Bigr)
\end{aligned}
\]
has first-order condition
\[
2w_{ij}\sigma_j^2+\lambda V_{j,t}=0,
\qquad j\ne i.
\]
Therefore \(w_{ij}=-\lambda V_{j,t}/(2\sigma_j^2)\). Substituting this expression
into the unbiasedness constraint gives
\[
-\frac{\lambda}{2}\sum_{k\ne i}\frac{V_{k,t}^2}{\sigma_k^2}=V_{i,t},
\qquad
\lambda
=
-\frac{2V_{i,t}}{\sum_{k\ne i}V_{k,t}^2/\sigma_k^2}.
\]
Hence
\[
w_{ij}
=
\frac{V_{i,t}V_{j,t}/\sigma_j^2}{\sum_{k\ne i}V_{k,t}^2/\sigma_k^2},
\qquad j\ne i,
\]
which is Eq.~\eqref{eqn:wij}. Since the objective is strictly convex whenever
\(\sigma_j^2>0\), these weights are the unique minimizer.
\end{proof}

\subsection{Proof of Proposition~\ref{prop:soft-value}}
\begin{proof}
Fix a prompt \(x\) and \(\beta>0\). We first derive the optimizer of the
KL-regularized objective using the same variational calculation as in
DPO-style derivations \citep{rafailov2023dpo}. Let
\[
p_y=\pi(y\mid x),\qquad
p_y^{\mathrm{ref}}=\pi_{\mathrm{ref}}(y\mid x),
\]
where the sums below range over responses in the support of
\(\pi_{\mathrm{ref}}(\bullet\mid x)\). For this fixed prompt, the objective is
\[
\begin{aligned}
J_x(p)
&=
\sum_y p_y r(x,y)
-
\beta\sum_y p_y
\log\frac{p_y}{p_y^{\mathrm{ref}}}.
\end{aligned}
\]
The maximization is over distributions \(p\) satisfying \(\sum_y p_y=1\). Its
Lagrangian is
\[
\begin{aligned}
\mathcal L(p,\lambda)
&=
\sum_y p_y r(x,y)
-
\beta\sum_y p_y
\log\frac{p_y}{p_y^{\mathrm{ref}}}\\
&\quad
+\lambda\Bigl(\sum_y p_y-1\Bigr).
\end{aligned}
\]
For any response \(y\) with \(p_y^{\mathrm{ref}}>0\), the first-order condition
is
\[
\begin{aligned}
0
&=
\frac{\partial\mathcal L}{\partial p_y}\\
&=
r(x,y)
-
\beta\!\left(
\log\frac{p_y}{p_y^{\mathrm{ref}}}+1
\right)
+
\lambda .
\end{aligned}
\]
Solving this equation gives
\[
p_y
=
C\,p_y^{\mathrm{ref}}
\exp\!\left(r(x,y)/\beta\right),
\]
where \(C=\exp(\lambda/\beta-1)\) does not depend on \(y\). Enforcing
\(\sum_y p_y=1\) yields
\[
\begin{aligned}
C^{-1}
&=
\sum_y p_y^{\mathrm{ref}}
\exp\!\left(r(x,y)/\beta\right)\\
&=
\mathbb E_{y\sim\pi_{\mathrm{ref}}(\bullet\mid x)}
\!\left[
\exp\!\left(r(x,y)/\beta\right)
\right]
=:Z_\beta(x).
\end{aligned}
\]
Therefore the unique maximizer is
\[
\pi_\beta^*(y\mid x)
=
\frac{
\pi_{\mathrm{ref}}(y\mid x)
\exp\!\left(r(x,y)/\beta\right)
}{
Z_\beta(x)
}.
\]
The objective is concave in \(p\) because it is a linear reward term plus
\(-\beta\) times a convex KL term, so the stationary point is globally optimal.

Now substitute this optimizer into the reward value:
\[
\begin{aligned}
V_\beta^*(x)
&=
\sum_y \pi_\beta^*(y\mid x) r(x,y)\\
&=
\frac{N_\beta(x)}{Z_\beta(x)},
\end{aligned}
\]
where
\[
\begin{aligned}
N_\beta(x)
&:=
\sum_y p_y^{\mathrm{ref}} r(x,y)
\exp\!\left(r(x,y)/\beta\right).
\end{aligned}
\]
Writing the sums in \(N_\beta(x)\) and \(Z_\beta(x)\) as expectations under
\(\pi_{\mathrm{ref}}(\bullet\mid x)\) gives Eq.~\eqref{eqn:Vbetaxstar}.
\end{proof}

\subsection{Proof of Proposition~\ref{prop:vop}}
\begin{proof}
Fix a training step \(t\), a target prompt \(i\), and condition on the
prompt batch \(\mathcal X_B\). For a leave-one-out linear baseline
\(b_i=\sum_{j\ne i}w_{ij}r_j\), the conditional MSE decomposes as
\[
\begin{aligned}
&\mathbb E[(V_{i,t}-b_i)^2\mid \mathcal X_B]
\\
&\quad =
\Bigl(V_{i,t}-\sum_{j\ne i}w_{ij}V_{j,t}\Bigr)^2
+\sum_{j\ne i}w_{ij}^2\sigma_j^2 .
\end{aligned}
\]
This is the same quadratic objective as in the proof of
Proposition~\ref{proBLUB}, but without the linear unbiasedness
constraint. Its first-order condition is
\[
\begin{aligned}
&-2V_{j,t}\Bigl(V_{i,t}-\sum_{k\ne i}w_{ik}V_{k,t}\Bigr)
+2w_{ij}\sigma_j^2=0,
\\
&\qquad j\ne i .
\end{aligned}
\]
Let \(c=V_{i,t}-\sum_{k\ne i}w_{ik}V_{k,t}\). Then
\[
w_{ij}=\frac{cV_{j,t}}{\sigma_j^2}.
\]
Substituting this expression into the definition of \(c\) gives
\[
c
=
V_{i,t}
-c\sum_{k\ne i}\frac{V_{k,t}^2}{\sigma_k^2},
\qquad
c
=
\frac{V_{i,t}}
{1+\sum_{k\ne i}V_{k,t}^2/\sigma_k^2}.
\]
Therefore
\[
w_{ij}
=
\frac{V_{i,t}V_{j,t}/\sigma_j^2}
{1+\sum_{k\ne i}V_{k,t}^2/\sigma_k^2},
\qquad j\ne i,
\]
which proves the claim. The objective is strictly convex whenever
\(\sigma_j^2>0\), so the minimizer is unique.
\end{proof}

\end{document}